\tikzset{ampersand replacement=\&}
\def\axiomf{\cF_{\kern-.3em\typea\kern-.1em\typeb}}
\def\deq{{\;\colon\kern-.2em=\;}}
\def\lden{[\kern-0.18em[}    
\def\rden{]\kern-0.18em]}  
\def\quotientspace#1#2{{#1/_{\kern-.5ex{#2}}}}
\def\termstructure#1#2{\cT\kern-.3ex\cE_{#1}(#2)}
\def\ambnormform#1{{#1}\hspace*{-1.1ex}\downarrow_{\kern-.2em\scriptscriptstyle *}} 
\def\typea{\alpha}
\def\typeb{\beta}
\def\NCalc{{\mathfrak N\kern -.1em\mathfrak K}}
\def\premdivider{{\mbox{$\quad$}}}
\def\@lineskipamount{4pt}
\def\lowerhalf#1{\hbox{\raise -0.8\baselineskip\hbox{#1}}}
\def\inruleanhelp#1#2#3{\setbox\tempa=\hbox{$\displaystyle{\mathstrut #2}$}%
                        \setbox\tempd=\hbox{$\; #3$}%
                        \setbox\tempb=\vbox{\vskip 2pt\halign{##\cr
        \mud{#1}\cr
        \noalign{\vskip\the\lineskip}%
        \noalign{\hrule height 0pt}%
        \rig{\vbox to 0pt{\vss\hbox to 0pt{\copy\tempd \hss}\vss}}\cr
        \noalign{\hrule}%
        \noalign{\vskip\the\lineskip}%
        \mud{\copy\tempa}\cr}}%
                      \tempc=\wd\tempb
                      \advance\tempc by \wd\tempa
                      \divide\tempc by 2 }
\def\inrulean#1#2#3{{\inruleanhelp{#1}{#2}{#3}%
                     \hbox to \wd\tempa{\hss \box\tempb \hss}}}
\def\inrulebn#1#2#3#4{\inrulean{#1\premdivider #2}{#3}{#4}}
\def\ian#1#2#3{{\lineskip\@lineskipamount\inrulean{#1}{#2}{#3}}} 
\def\ibn#1#2#3#4{{\lineskip\@lineskipamount\inrulebn{#1}{#2}{#3}{#4}}}  
\def\ianc#1#2#3{{\lineskip\@lineskipamount\lowerhalf{\inruleanhelp{#1}{#2}{#3}%
                   \box\tempb\hskip\wd\tempd}}}
\def\ibnc#1#2#3#4{{\lineskip\@lineskipamount\ianc{#1\premdivider #2}{#3}{#4}}}
\def\icnc#1#2#3#4#5{{\lineskip\@lineskipamount\ianc{#1\premdivider #2\premdivider #3}{#4}{#5}}}
\newbox\tempa
\newbox\tempb
\newdimen\tempc
\newbox\tempd
\def\mud#1{\hfil $\displaystyle{#1}$\hfil}
\def\rig#1{\hfil $\displaystyle{#1}$}
\def\phi{\varphi}
\def\cC{{\mathcal{C}}}
\def\cE{{\mathcal{E}}}
\def\cF{{\mathcal{F}}}
\def\cL{{\mathcal{L}}}
\def\cP{{\mathcal{P}}}
\def\cS{{\mathcal{S}}}
\def\cT{{\mathcal{T}}}
\def\cV{{\mathcal{V}}}
\def\my@ref#1#2#3{#1~\ref{#2:#3}}
\def\impl{\rightarrow}
\def\typa{\shortrightarrow}
\def\land{\wedge}
\def\lor{\vee}
\def\turns{\vdash}
\def\pow{\cP}
\def\emptys{\emptyset}
\def\bool{o}
\newcommand{\Nat}{{\mathbb{N}}}
\begin{document}

\title{Higher-order Logic as Lingua Franca}
\subtitle{Integrating Argumentative Discourse and Deep Logical Analysis}


\author{David Fuenmayor
	\and
	Christoph Benzm\"uller
}


\institute{David Fuenmayor \at
              Dep. of Mathematics and Computer Science, Freie
              Universit\"at Berlin, Berlin, Germany\\
              \email{david.fuenmayor@fu-berlin.de}           
           \and
           Christoph Benzm\"uller \at
            Dep. of Mathematics and Computer Science, Freie
              Universit\"at Berlin, Berlin, Germany\\
             \email{c.benzmueller@fu-berlin.de} 
}

\date{Received: date / Accepted: date}

\maketitle

\begin{abstract}
  We present an approach towards the deep, pluralistic logical analysis of argumentative discourse that benefits from the application of state-of-the-art automated reasoning technology for classical higher-order logic. Thanks to its expressivity this logic can adopt the status of a uniform \textit{lingua franca} allowing the encoding of both formalized arguments (their deep logical structure) and dialectical interactions (their attack and support relations). We illustrate this by analyzing an excerpt from an argumentative debate on climate engineering.  
  Another, novel contribution concerns the definition of abstract, language-theoretical foundations for the characterization and assessment of shallow semantical embeddings (SSEs) of non-classical logics in classical higher-order logic, which constitute a pillar stone of our approach.
  The novel perspective we draw enables more concise and more elegant characterizations of semantical embeddings of logics and logic combinations, which is demonstrated with several examples.
 
\keywords{Argumentation \and Deep
  logical analysis \and Higher-order logic \and Logical pluralism \and Shallow semantical
  embeddings \and Climate engineering \and Automated reasoning}
\end{abstract}
\pagebreak

\section{Introduction}
\label{sec:introduction}	
Research presented at the 2\textsuperscript{nd} and 3\textsuperscript{rd} International Conferences on Logic and Argumentation (CLAR) applied higher-order automated and interactive theorem proving
to the deep logical analysis of rational arguments and argument networks \parencite{CH,C82}.

In the former paper we argued for an interpretive approach towards the deep, pluralistic logical analysis of argumentative discourse, termed \textit{computational hermeneutics}, amenable to partial mechanization using three kinds of automated reasoning technology: (i) theorem provers, which tell us whether a formalized claim logically follows from a set of assumptions; (ii) model finders, which give us (counter-)examples for formulas in the context of a background set of assumptions; and (iii) so-called ``hammers'', which automatically invoke (i) as to find minimal sets of relevant premises sufficient to derive a claim, whose consistency can later be verified by (ii). We exemplified this approach by employing implementations of (i-iii) for classical higher-order logic (HOL, \cite{SEP-TT}) within the \emph{Isabelle/HOL} proof assistant \parencite{Isabelle} to analyze several variants of Kurt G\"odel's ontological argument.\footnote{
	Ontological arguments (or proofs) are arguments for the existence of a Godlike being, common since centuries in philosophy and theology. More recently, they have attracted the attention of philosophers and logicians, not only because of their interesting history, but also because of their quite sophisticated logical structures.
}
Initially, the variants were reconstructed as networks of abstract nodes, which were mechanically tested for validity and consistency after adding or removing dialectical relations of attack and support. Later, each abstract node became `instantiated' by identifying it with a formula of a target logic, a higher-order modal logic in this case, and the experiments were repeated. 
Employing tools (i-iii), we showed that, e.g., consistency results for the abstracted arguments are not preserved at the instantiated level, i.e., after the internal logical structure of the argument nodes is provided. Drawing on this and other similar results, we argued that the analysis of non-trivial natural-language arguments at the abstract argumentation level is useful, but of limited explanatory power. Achieving proper explanatory power requires the extension of techniques from abstract argumentation with means for deep and logico-pluralistic semantical analysis using expressive logic formalisms, e.g.,~approaches inspired by Montague semantics \parencite{sep-montague-semantics} employing higher-order and modal logics, and, vice versa, deep logical methods for semantical analysis can become enriched at more abstract level by integrating them with contemporary argumentation frameworks \parencite{baroni2018handbook}.

In the second paper we further expanded these initial ideas and experiments towards the design of a systematic framework, thereby still largely omitting a proper theoretical exhibition, as contributed in the present article. 
In that paper we then focused from the beginning on instantiated argument networks and on the use of automated tools to support the process of reconstructing both individual argumentations and attack/support relations as \textit{deductive} arguments.  To that account we introduced another case study, this time formalizing and evaluating an excerpt from a quite contemporary and controversial discourse topic, \textit{ethical aspects of climate engineering} \parencite{CE}, in order to illustrate the generality of our ambitions in contrast to the previously studied, more philosophically-oriented arguments. 
Unsurprisingly, as in most debates, arguments can generally be seen as enthymemes in need for missing, implicit premises; apparently this seems even more so the case for `real-life' debates than for philosophical arguments. We illustrated how the utilization of automated reasoning technology for expressive higher-order logics has realistic prospects in the analysis of argumentative discourse `in the wild'. In particular, our results suggested that this technology can be very useful to help in the reconstruction of argument networks using structured, deductive approaches,\footnote{
	Our reason for choosing a deductive approach originally had a technical motivation: the base logic provided (off-the-shelf) in \textit{Isabelle/HOL} is classical (structural, monotonic, etc.), and, as we will see, SSEs in the default setting reuse the classical consequence relation of the meta-logic HOL. However, SSEs of non-monotonic logics are also possible, e.g., by explicitly modeling a non-monotonic conditional operator \parencite{J31}
and by defining a corresponding (non-monotonic) consequence relation which satisfies the deduction meta-theorem. For the time being we won't be pursuing such an approach, aiming at simplicity of exposition, and since this often takes a toll on the performance of automated tools. In this respect we have chosen to treat arguments as deductions, thus locating all fallibility of an argument in its (often implicit) premises.} 
such as \textit{ABA} \parencite{dung2009assumption} and \textit{deductive argumentation} \parencite{BH},  and also to help identify implicit and idle premises in arguments, as addressed in our previous work.

The case studies conducted in both papers were formalized and encoded, using combinations of quantified modal logics, in the \emph{Isabelle/HOL} proof assistant, which features a classical higher-order logic (HOL) as its logical foundation. HOL is a conservative extension of Church's \textit{simple type theory} (STT), a logic of functions formulated on top of the simply typed lambda-calculus, which also provides a foundation for functional programming.
In order to turn \textit{Isabelle/HOL} into a flexible modal logic reasoner, we have adopted the \textit{shallow semantical embeddings} (SSE) approach \parencite{J41}, which harnesses the expressive power of higher-order logic as a \textit{meta-language} allowing the faithful encoding of Kripke-like semantics for quantified modal logics (among other non-classical logics) in {STT/HOL}, thereby turning higher-order theorem proving systems into universal reasoning engines.

Putting a focus on the `real-life' case study from the second paper, this article merges the ideas and developments from our prior conference papers and presents them in a coherent, self-contained manner. 
In addition, we further extend our prior work  by adding a self-contained exposition of the main theoretical underpinnings of the SSE approach using the conceptual framework of formal signatures, languages and their morphisms -- a perspective inspired by, and linking to, algebraic methods in logic \parencite{handbook-logic-cs5,carnielli2008analysis} --,  while at the same time striving for a most natural, self-contained exposition, e.g.~not relying on the reader's previous knowledge of categorial or algebraic notions. The original case study on the ethical aspects of climate engineering has been slightly adapted for the sake of illustrating the theoretical notions as newly introduced.  

Paper structure: In Section~\ref{sec:linguafranca} we argue for the usefulness of a higher-order logic as a sort of \textit{lingua franca} allowing us to `glue together' arguments encoded in different logics. In the next two sections we prepare the ground for the definition of a \textit{shallow semantical embedding} (SSE) of a target logic into a host (meta-)logic. In Section~\ref{sec:notions} we introduce some basic conceptual machinery (illustrated with numerous examples) allowing us to characterize both host and target languages in terms of their signatures, while abstracting away from their grammars. Section~\ref{sec:sse-char} offers a semantic justification for this move by introducing the notion of a \textit{derived signature}, which enables us to characterize SSEs as: (i) specifically constrained fragments of a higher-order host language, and (ii) (conservative) translations from the target language into this fragment of the host language. This section then illustrates with a wide array of examples how to employ the SSE technique to characterize different kinds of target logics this way.
Section~\ref{sec:arg} introduces some useful notions from formal argumentation to be employed later in our case study and shows how  to encode them in the higher-order logic of \textit{Isabelle/HOL} by using this technique.
Section~\ref{sec:case-study} presents our illustrative `real-life' case study on the subject of \textit{climate engineering}.\footnote{
	Sources for this case study have been made available online (\url{https://github.com/davfuenmayor/CE-Debate}). We encourage the interested reader to try out (and improve on) this work.
}
Section \ref{sec:conclusion} concludes the paper and discusses ongoing and future work.

\section{Classical Higher-Order Logic as a Lingua Franca}
\label{sec:linguafranca}

The need for combining heterogeneous, expressive logical formalisms for the analysis of argumentative discourse is manifest in view of the richness of natural language phenomena.
As we see it, the problem is less the lack of logical systems to represent those diverse perspectives, but rather the issue of bringing them coherently under the same roof. In other words, the actual lack is a \textit{lingua franca} by means of which we can (i) flexibly combine logics, as required for the appropriate formalization of non-trivial normative arguments, and (ii) enable the articulation of inter-logical dialectical relations; e.g.,~\textit{how can arguments formalized using different logics actually attack or support each other?}.

The proposed solution relies on the adoption of classical higher-order logic (HOL) \textit{as a metalanguage} into which the logical connectives of (a combination of) target logics can be `translated' or `embedded'. This approach, termed \textit{shallow semantical embeddings} (SSE)~\parencite{J41,J23}, has quite interesting practical applications, and it supports the reuse of existing reasoning infrastructure for first-order (FO) logic and higher-order (HO) logic for seamlessly combining and reasoning with different quantified classical and non-classical logics---including modal, deontic, and paraconsistent logics as illustrated below---many of which are well suited for normative reasoning applications.\footnote{
	The SSE technique, which has  become the pillar stone of the LogiKEy~\parencite{J48} framework and methodology for designing normative theories in ethical and legal reasoning, has already demonstrated its relevance for research, education and application; cf.~the examples at \url{logikey.org}. It has e.g.~been exploited successfully in prior work on the logical analysis of argumentative discourse~\parencite{J38,B19,C77,C82} and also in computational metaphysics \parencite{J47}.
}
 Moreover, it supports a \textit{logico-pluralistic} approach towards the formalization of arguments, indeed blurring the line between logical and extralogical, respectively~syncategorematic and categorematic, expressions.

\begin{figure}[t]
	\centering{
		\includegraphics[width=0.6\textwidth]{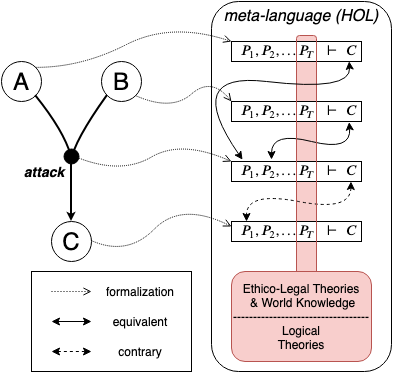}
		\caption{Analyzing a joint attack relation.}
		\label{figArgAnalysis}}
\end{figure}

An illustrative situation is depicted in Fig.~\ref{figArgAnalysis}, where two arguments, formalized in, say, logics $\cL_1$ and $\cL_2$, jointly attack a third argument, formalized in logic $\cL_3$. 
The joint attack relation is itself modeled and encoded as an argument employing the logic combination $\cL_1+\cL_2+\cL_3$. This approach to the computer-supported logical analysis of argumentative discourse is further explained in Section~\ref{sec:case-study} using 
the already mentioned `real-life' debate on the ethical aspects of climate engineering.
Further notice that, since our approach presupposes the logical validity and non-circularity of the formalized arguments, unless we have solid evidence of the contrary,\footnote{
	In particular, following the interpretive ``principle of charity'' \parencite{Davidson}, we aim at formalizations which render the argument as logically valid, while having a consistent and minimal set of assumptions. These actions are to be repeated until arriving at a state of \emph{reflective equilibrium}: a state where our arguments and claims have the highest degree of coherence and acceptability according to syntactic and, particularly, inferential criteria of adequacy; cf.~\textcite{CH,B19}.	
}
there will be, in most non-trivial cases, a need for additional, tacit premises; cf.~$P_T$ in Fig.~\ref{figArgAnalysis}. These additional premises can be of a different nature. Some may be part of underlying \textit{logical theories} and thus correspond to axioms (such as $K$, $D$, $T$, $4$, etc.)  or definitions from the SSE for the target logic (cf.~\cite[\S7.1]{J48}), while others may correspond to formalized principles from e.g.~domain theories or general world knowledge (cf.~\cite[\S7.2]{J48}). They may also correspond to unstated, `implicit' assumptions specific for the argument at hand.

In the next section we will explore some notions allowing us to articulate theoretically our notion of SSE. We start by illustrating propositional signatures, before generalizing towards the corresponding higher-order notions.

\section{Preliminaries: Signatures, Grammars and Languages}
\label{sec:notions}

It is customary to define propositional languages over a given set of propositional symbols (sometimes called a ``signature''), while the logical connectives are introduced by the language-generating mechanism or grammar. As an illustration, a typical textbook approach for defining a modal language starts by introducing a ``signature'' set $\cP=\{p^n\}_{0\leq n< m}$
consisting of $m$ propositional atoms. Subsequently, the modal language becomes defined inductively by a  grammar such as the following, where $p$ ranges over elements of $\cP$: 
\[ \phi :=~p~|~\bot~|~\neg\phi~|~\phi_1\lor\phi_2~|~\Box\phi\]

By contrast, presenting a logico-pluralistic approach compels us to do things slightly differently. Our main tactic in this section consists in shifting logical connectives from the grammar into the signature of a language, which  is a principle particularly well known in the literature on HO logics. This move facilitates definitions and assessments at a higher level of abstraction. Also note that, in the present account, substitutions do not operate on propositional symbols or constants, and we will explicitly introduce variables for that purpose.

Before getting started some remarks are pertinent: The present section aims at building a conceptual framework and is thus rich in definitions. For each of them we have provided one or several examples regarding well-known modal-like systems which have been previously encoded using the SSE approach. It is important to note that SSEs of non-modal systems are also possible \parencite{J46}, their properties have just been less studied. Anyhow, the `real-life' illustration for the introduced notions occurs in our case study in Section~\ref{sec:case-study}. In this exposition we strive for a middle path between readability and rigor. We may switch between prefix and infix notation without warning, and we may omit parenthesis when they can be easily inferred in context. 

\subsection{Propositional Signatures}
\label{sec:notions:prop-sig}

We start with discussing propositional languages since, in spite of their simplicity, they readily provide a perfect conceptual bridge towards the sort of HO languages (based on functional STT) we utilize in the SSE approach. We will gracefully skip FO logic, since, for practical purposes, formulating FO languages as a fragment of an adequate HO language suffices.

\begin{definition}[P-Signature] 
A propositional signature, termed \textit{P-signature}, is a tuple $\cS=\langle \cC, \cP_0 \rangle$, where $\cC$ is a non-empty, denumerable set of disjoint sets $\{\cC_k\}_{k\in \Nat}$, and $\cP_0$ is a possibly empty, denumerable set $\{p^n\}_{n\in \Nat}$. The elements of each $\cC_k$ are symbols called \emph{k-ary} connectives 
and are always given a fixed (intended) interpretation. The elements of $\cP_0$ are symbols, called \emph{propositional constants}, whose denotation varies in each interpretation. P-signatures can be partially ordered. We have $\langle \cC^1, \cP^1_0 \rangle\leq \langle \cC^2, \cP^2_0 \rangle$ iff $\cC^1_k \subseteq \cC^2_k$ and $\cP^1_0 \subseteq \cP^2_0$ (analogously for $<$ and $\subset$). We define the union (intersection) of P-signatures as the union (intersection) of their respective components: $\cS^1 \cup(\cap)\,\cS^2 = \langle \{\cC_k\}_{k\in \Nat},\cP_0 \rangle$, with $\cC_k=\cC^1_k \cup(\cap)\,\cC^2_k$ and $\cP_0 = \cP^1_0 \cup(\cap)\,\cP^2_0$.
\end{definition}

$\cC$ and $\cP_0$ can be seen as \emph{logical}, resp.~\emph{extralogical},  base expressions of a propositional language, and elements of $\cP_0$ can be seen as having arity zero. In the sequel, to avoid cluttering in our notation, all sets $\cC_k$ which are not explicitly mentioned are assumed to be empty. Signatures can be minimal or non-minimal.

\begin{example}[CPL]\label{example-sig-cl}
An exemplary P-signature for classical propositional logic (CPL) with $m$ propositional constants is: $\cS_{\text{CPL}}=\langle \{\cC_k\}_{k\leq 2},\{p^n\}_{n<m} \rangle$, with $\cC_0=\emptys$, $\cC_1=\{\neg\}$ and $\cC_2=\{\land\}$. Notice that
$\cS_{\text{CPL}}$ can be considered minimal; other connectives can be defined the usual way.
\end{example}

\begin{example}[IPL]\label{example-sig-il}
An exemplary P-signature for intuitionistic propositional logic (IPL) with $m$ propositional constants is: $\cS_{\text{IPL}}=\langle \{\cC_k\}_{k\leq 2},\{p^n\}_{n<m} \rangle$, with $\cC_0=\{\bot\}$, $\cC_1=\{\neg\}$ and $\cC_2=\{\land,\lor,\impl\}$. This signature is non-minimal since, e.g.,~$\neg$ and $\lor$ could be defined based on the others. Moreover, note that $\cS_{\text{CPL}} < \cS_{\text{IPL}}$.
\end{example}

\begin{example}[ML]\label{example-sig-modal}
An exemplary P-signature for multi-modal propositional logic with denumerable propositional constants is: $\cS_{\text{ML}}=\langle \{\cC_k\}_{k\leq 2},\{p^n\}_{n\in\Nat} \rangle$, with $\cC_0=\emptys$, $\cC_1=\{\neg\}\cup \{\Box^n\}_{n\in\Nat}$ and $\cC_2=\{\land\}$. Note that $\cS_{\text{CPL}}=\cS_{\text{ML}}\cap\cS_{\text{IPL}}$.
\end{example}

\subsection{Propositional Languages}
\label{sec:notions:prop-lang}

Utilizing the notion of P-signatures, we introduce a technique to define different propositional languages inductively at a higher level of abstraction. A relevant objective thereby is to keep the grammar fixed, so that only changes in the signature can account for differences in the defined languages. 

In addition to the symbols defined in the signature, we from now on assume a \textit{fixed}, denumerable set $\cV_0=\{v^n\}_{n\in\Nat}$ of schema variables. 

\begin{definition}[P-Language]  
The propositional language $L=\cL^P(\cS)$ over the P-signature $\cS=\langle \{\cC_k\}_{k\in\Nat}, \cP_0 \rangle$ is the smallest set such that:
\begin{enumerate}
\normalfont
	\item $v,p,c\in \cL^P(\cS)$ for every $v\in \cV_0$, $p\in\cP_0$, and $c\in\cC_0$;
	\item $c(\phi_1,\dots,\phi_k) \in \cL^P(\cS)$, whenever $c\in \cC_k$ ($k\geq 1$) and $\phi_1,\dots,\phi_k \in \cL^P(\cS)$.
\end{enumerate}

\end{definition}

\begin{example}
Each P-signature $\cS$ presented in Ex.~\ref{example-sig-cl}--\ref{example-sig-modal} induces a corresponding propositional language $\cL^P(\cS)$ in the manner exposed above.
\end{example}

\begin{definition}[Language Fragment]
	\label{def-fragment}
Let $L^1$ and $L^2$ be two languages, i.e.,~there exist signatures $\cS^1$ and $\cS^2$ such that $L^1=\cL^P(\cS^1)$ and $L^2=\cL^P(\cS^2)$. We say that $L^1$ is a \textit{fragment} of $L^2$, and write $L^1 \leq L^2$, iff $L^1 \subseteq L^2$. In a similar vein, $L^1$ is a \emph{proper fragment} of $L^2$, noted $L^1 < L^2$, iff $L^1 \subset L^2$. 
\end{definition}	 

\begin{example}
Language $\cL^P(\cS_{\text{CPL}})$ is a fragment of language $\cL^P(\cS_{\text{IPL}})$.
\end{example}

Notice that, in this exposition, languages are not simply sets of well-formed formulas. Languages must be systematically generated (e.g.,~induced by a signature), and  they are, among others, not closed under intersection or formula-removal. This is why we won't define any set-theoretical operations on languages. Moreover, note that the present characterization of a language fragment is actually applicable to all languages induced by signatures, and not only propositional ones.

\begin{remark}[On derived connectives]
Presentations of propositional languages often include so-called \emph{derived connectives}, which are actually abbreviations (``syntactic sugar'') for sequences of connectives; e.g., in modal logics the operator $\Diamond$ may simply abbreviate $\neg\Box\neg$. Note, however, that $\neg\Box\neg$ is not a well-formed expression, and thus there is no proper way to characterize derived connectives using P-signatures and the language-generating grammar presented above. This contrasts with higher-order languages, introduced in Section~\ref{sec:notions:ho-lang}, which are equipped with a suitable functional abstraction mechanism. 	
\end{remark}

\subsection{Higher-order Signatures}
\label{sec:notions:ho-sig}

In this section we introduce (functional) HO signatures as a straightforward generalization of propositional signatures.

\begin{definition}[Functional Type]
	\label{def-functional-type}
	We inductively define the following denumerable set $\tau$:
	\begin{itemize}
		\normalfont
		\item $\iota_n \in \tau$ for every $n\in \Nat$;
		\item $\alpha\typa\beta \in \tau$, whenever $\alpha,\beta \in \tau$.
	\end{itemize}
The elements of set $\tau$ are called \textit{functional types}. They will play analogous roles to the arities in P-signatures. We will use the following aliases for commonly used (base) types: $\bool$ for $\iota_0$, $w$ for $\iota_1$, and $e$ for $\iota_2$. Note that $\typa$ associates to the right, so that~$\alpha\typa\beta\typa\gamma$ is shorthand for $\alpha\typa(\beta\typa\gamma)$. Moreover, $\alpha^n\typa\beta$ is shorthand for  $\alpha\typa\alpha\typa\ldots_{~(n-times)}\ldots\typa\beta$.
\end{definition}

We do not introduce product types, since they can (and will) be emulated using functional types. For example,~the product type $(\alpha\times\beta)\typa\eta$ corresponds to the functional type  $\alpha\typa(\beta\typa\eta)$.\footnote{As an illustration, in a functional language, a binary operator such as $+ :\Nat\times\Nat\typa\Nat$ acquires the type: $\Nat\typa(\Nat\typa\Nat)$. This way we have, e.g., $((+~3)~4)=7$ as a wff. The underlying notion is known as Sch\"onfinkelization or Currying in the literature.}

\begin{definition}[F-Signature]
A functional type-theoretical signature, termed \textit{F-signature},  is a tuple $\cS=\langle \cC, \cP \rangle$, where $\cC$ is a non-empty denumerable set of (disjoint) sets $\{\cC_\alpha\}_{\alpha \in \tau}$, and where $\cP$ is a (possibly empty) denumerable set of (disjoint) sets $\{\cP_\alpha\}_{\alpha \in \tau}$. The elements of each $\cC_\alpha$ are symbols called \textit{connectives of type $\alpha$}. They are always given a fixed (intended) interpretation. The elements of each $\cP_\alpha$ are also symbols and they are called \textit{parameters (or functional constants) of type $\alpha$}. Their denotation varies in each interpretation.
F-signatures can be \textit{partially ordered}: we have $\langle \cC^1, \cP^1 \rangle\leq \langle \cC^2, \cP^2 \rangle$ iff $\cC^1_\alpha \subseteq \cC^2_\alpha$ and $\cP^1_\alpha \subseteq \cP^2_\alpha$ (analogously for $<$ and $\subset$).
Moreover, we define: $\cS^1 \cup(\cap)\,\cS^2 = \langle \{\cC_\alpha\}_{\alpha \in \tau},\{\cP_\alpha\}_{\alpha \in \tau}\rangle$, with $\cC_\alpha=\cC^1_\alpha \cup(\cap)\,\cC^2_\alpha$ and $\cP_\alpha = \cP^1_\alpha \cup(\cap)\,\cP^2_\alpha$.
\end{definition}

Observe that F-signatures do not feature $\lambda$-expressions, which appear instead in the language-generating grammar as introduced below in Def.~\ref{def-f-lang}. In doing this we aim at establishing a common ground for the analysis of signatures and languages at a higher level of abstraction. 

\begin{example}[CPL-F]\label{example-sig-fttcl}
An exemplary F-signature for classical propositional logic (CPL) with $m$ propositional constants is: $\cS_{\text{CPL-F}}=\langle \{\cC_{o\typa o},\cC_{o\typa o\typa o} \},\{\cP_o\} \rangle$, with $\cC_{o\typa o}=\{\neg\}$, $\cC_{o\typa o\typa o}=\{\land\}$ and $\cP_o=\{p^n\}_{n<m}$. Compare this F-signature with the corresponding P-signature as introduced in Ex.~\ref{example-sig-cl}. 
\end{example}

In Ex.~\ref{example-sig-fttcl} the types have been carefully chosen to reflect the intended interpretation of the symbols in the signature. Type $o$ is intended to represent truth-values, and types $o\typa o$ and $o\typa o\typa o$ are intended to correspond to unary and binary logical connectives, respectively. For the time being our exposition focuses on syntactic aspects. Semantical issues may gradually crop up though, as we (informally) point out intended interpretations, and they will become conspicuous when we discuss \textit{derived signatures} in Section~\ref{sec:sse-char:derived}.

\begin{example}[FOL-F]\label{example-sig-fttfol}
	An exemplary F-signature for FO logic  with equality and
countably many function symbols, $m$ individual constant symbols, and $n$ unary predicate symbols is:
	$\cS_{\text{FOL-F}}=\langle \{\cC_{o\typa o},\cC_{o\typa o\typa o},\allowbreak\cC_{e\typa e},\allowbreak\cC_{e\typa e\typa o},\allowbreak\cC_{(e\typa o)\typa o} \},\allowbreak\{\cP_e,\cP_{e\typa o}\} \rangle$, with $\cC_{o\typa o}=\{\neg\}$,
	$\cC_{o\typa o\typa o}=\{\land,\lor,\impl\}$, $\cC_{e\typa e}=\{f^k\}_{k\in \Nat}$, $\cC_{e\typa e\typa o}=\{=^e\}$, $\cC_{(e\typa o)\typa o}=\{\Pi^e\}$, $\cP_e=\{p_e^k\}_{k<m}$, and $\cP_{e\typa o}=\{p_{e\typa o}^k\}_{k<n}$.\footnote{\label{footnote-label-types}Note that, when we have parameters for each type, we may use their type as an additional label. This is to avoid defining a numbering mechanism for types.}
\end{example}

In Ex.~\ref{example-sig-fttfol} the type $e$ is intended to denote individuals, so that $e\typa e$ becomes the type for (unary) functions over individuals.
In particular, the symbol $\Pi^e$, of type $(e\typa o)\typa o$, has a fixed intended interpretation as a special second-order (SO) predicate assigning true to those unary predicates which are true of every individual (of type $e$).
	More generally, $\Pi^{\alpha}$, for $\alpha\in\tau$ 
of type $(\alpha\typa o)\typa o$, has a fixed interpretation as a special predicate assigning true to those 
predicates of type $\alpha\typa o$ which are true of all of their arguments (of type $\alpha$).
We use $\forall x_\alpha.\phi$ as shorthand 
for $\Pi^\alpha(\lambda x_\alpha.\phi)$ and, analogously, $\exists x_\alpha.\phi$ as shorthand for $\neg\Pi^\alpha(\lambda x_\alpha.\neg\phi)$.

\begin{example}[SOL]\label{example-sig-fttsol} \sloppy
	An exemplary F-signature for SO logic with relations and with two SO predicate constants $H^1$ and $H^2$, but without any functions symbols is:
	$\cS_{\text{SOL}}=\langle \{\cC_{o\typa o},\cC_{o\typa o\typa o}\} \cup \{\cC_{(e^n\typa o)\typa o}\}_{n\in \Nat} \cup \{\cC_{((e\typa o)^n\typa o)\typa o}\}_{n\in \Nat},\allowbreak\{\cP_{(e\typa o)\typa o}\} \rangle$, with $\cC_{o\typa o}=\{\neg\}$, $\cC_{o\typa o\typa o}=\{\land,\lor,\impl\}$, $\cC_{(e\typa o)\typa o}=\{\Pi^{e}\}$, $\cC_{((e\typa o)\typa o)\typa o}=\{\Pi^{(e\typa o)}\}$,
	and $\cP_{(e\typa o)\typa o}=\{H^1, H^2\}$.
\end{example}

\begin{example}[STT]\label{example-sig-fttstt1}
	An exemplary F-signature for Church's simple type theory (STT) with countably many parameters (for all types $\alpha\in\tau$) is:\linebreak
	$\cS_{\text{STT}}=\langle \{\cC_o,\cC_{o\typa o},\cC_{o\typa o\typa o}\} \cup \{\cC_{(\alpha^n\typa o)\typa o}\}_{\alpha\in\tau,n\in\Nat}\,,\{\cP_\alpha\}_{\alpha\in\tau} \rangle$, with $\cC_o=\{T,F\}$, $\cC_{o\typa o}=\{\neg\}$, $\cC_{o\typa o\typa o}=\{\land,\lor,\impl\}$, $\cC_{(\alpha\typa o)\typa o}=\{\Pi^{\alpha}\}$,
	and $\cP_\alpha=\{p^k_\alpha\}_{k\in\Nat}$.
\end{example}

\begin{example}[Minimal STT]\label{example-sig-fttstt2}
	A minimal F-signature for STT, cf.~\textcite{andrews02,SEP-TT}, with countably many parameters is: 
	$\cS_{\text{STT}^=}=\langle \{\cC_{\alpha\typa\alpha\typa o}\}_{\alpha\in\tau}\,,\{\cP_\alpha\}_{\alpha\in\tau} \rangle$, with $\cC_{\alpha\typa\alpha\typa o}=\{=^\alpha\}$, and $\cP_\alpha=\{p^k_\alpha\}_{k\in\Nat}$.
\end{example}

Our above examples are motivated by the idea to provide uniform, comparable and combinable characterizations of different logic languages, and this will be very useful for supporting a more elegant presentation of the SSE technique. We provide some further useful examples in this direction.

\begin{example}[Relational structures]
	\label{example-sig-relstr}
	Consider the following F-signature for the FO language of relational structures. Such a language can be used, in particular, to express and encode the semantics of propositional normal modal logics using Kripke semantics (type $w$ is then associated with possible worlds).
	$\cS_{\text{FORS}}=\langle \{\cC_o,\cC_{o\typa o},\allowbreak\cC_{o\typa o\typa o},\allowbreak\cC_{(w\typa o)\typa o} \},\allowbreak\{\cP_{w\typa o}, \cP_{w\typa w\typa o}\} \rangle$, with $\cC_o=\{T,F\}$, $\cC_{o\typa o}=\{\neg\}$,
	$\cC_{o\typa o\typa o}=\{\land,\lor,\impl\}$, $\cC_{(w\typa o)\typa o}=\{\Pi^w\}$, $\cP_{w\typa o}=\{p^k\}_{k\in\Nat}$, and $\cP_{w\typa w\typa o}=\{R^k\}_{k\in\Nat}$. 
\end{example}

\begin{example}[Neighborhood structures]
	\label{example-sig-nbhdstr} \sloppy
	Consider the following F-signature for a SO language of neighborhood structures. Such a language can be used to express and encode different sorts of semantics for propositional non-normal modal logics. This example features two sorts of (``neighborhood'') functions for monadic and dyadic operators. ($wo$ is used  as shorthand for type ${w\typa o}$ to improve readability.)
	$\cS_{\text{SONS}}=\langle \{\cC_{o\typa o},\cC_{o\typa o\typa o},\cC_{wo\typa o}, \cC_{(wo\typa o)\typa o}\},\{\cP_{wo}, \cP_{wo\typa wo}, \cP_{wo\typa wo\typa wo}\} \rangle$, with $\cC_{o\typa o}=\{\neg\}$,
	$\cC_{o\typa o\typa o}=\{\land,\lor,\impl\}$, $\cC_{wo\typa o}=\{\Pi^w\}$, $\cC_{(wo\typa o)\typa o}=\{\Pi^{wo}\}$, $\cP_{wo}=\{p^k\}_{k\in\Nat}$, $\cP_{wo\typa wo}=\{N_1^k\}_{k\in\Nat}$, and $\cP_{wo\typa wo\typa wo}=\{N_2^k\}_{k\in\Nat}$.
\end{example}

\begin{remark}
	\label{remark-neighborhood}
	We give some further explanations. In semantical approaches to (non-normal) modal and deontic logics (cf.~\textit{minimal} semantics \parencite{chellas1980modal} or \textit{neighborhood} semantics \parencite{pacuit2017neighborhood}) so-called \textit{neighborhood functions} are usually introduced, in set-theoretical terms, as functions $N(w): W \typa \pow(\pow(W))$ (where $W$ is the domain set and $\pow(W)$ its powerset) that assign to each world $w$ a set of sets: its neighborhood. This would correspond to the functional type ${w\typa (w\typa o)\typa o}$. It is evident that $N$ can be associated with a function $N^*(\phi): \pow(W)\typa \pow(W)$, where $N^*(\phi)$ stands for the set of worlds to which $\phi$ gets assigned by $N$. This corresponds to the functional type ${(w\typa o)\typa (w\typa o)}$. The same rationale can be applied to dyadic neighborhood functions, which thus get the type ${(w\typa o)\typa (w\typa o)\typa (w\typa o)}$. Note that $\Pi^w$ and $\Pi^{(w\typa o)}$ (for quantifiers) range over worlds and propositions (sets of worlds) respectively.
\end{remark}

\subsection{Higher-order Languages}
\label{sec:notions:ho-lang}

Analogous to the propositional case, we define different HO languages inductively at a higher level of abstraction. Again, the grammar is fixed for all languages and only the signature changes from case to case. We thereby assume a \textit{fixed}, denumerable set $\cV=\{v^k_\alpha\}_{\alpha\in\tau,k\in\Nat}$ of variables.

\begin{definition}[F-Language]
	\label{def-f-lang}
The (HO) \textit{functional language $L=\cL^F(\cS)$} over F-signature $\cS=\langle \cC,\cP \rangle$ is the smallest set defined inductively as:
\begin{enumerate}
	\normalfont
	\item $v,p,A\in \cL^F(\cS)$ for every $v\in \cV$, $p\in \cP_{\alpha\in\tau}$, and $A\in \cC_{\alpha\in\tau}$;
	\item $(A_{\alpha\typa\beta}~B_\alpha)_\beta \in \cL^F(\cS)$ whenever $A,B \in \cL^F(\cS)$ for all $\alpha, \beta$ in $\tau$;
	\item $(\lambda x_\alpha.A_\beta)_{\alpha\typa\beta}\in \cL^F(\cS)$ whenever $A \in \cL^F(\cS)$ and $x\in \cV$ for all $\alpha, \beta$ in $\tau$.
\end{enumerate}	
The elements of an F-language, i.e., its well-formed formulas, are called \emph{terms}; terms of type $o$ are sometimes called formulas.

We define the \textit{grounded}\footnote{For want of a better word, we call terms ``grounded'' when they are `parameter-free'. Note that this does not coincide with the standard notion of ``grounded'' in the literature.
}
language $\cL^F_{Gr}(\cS)$ over F-signature $\cS=\langle \cC,\cP \rangle$ as the language over $\cS^{Gr}=\langle \cC,\emptys \rangle$.
A term is called \emph{closed} if it does not contain any free variables.\footnote{Free variables are those which appear either outside any $\lambda$-expression, or unbound inside some $\lambda$-expression; cf.~\textcite{SEP-TT} for a detailed exposition.}
We define the \emph{language of closed terms} of a language $L=\cL^F(\cS)$ as the subset of $L$ consisting of all of its closed terms; this language is noted $Closed(L)$. Moreover, we lift the definition of \textit{(proper) fragment} relation to F-languages by adapting Def. \ref{def-fragment} ($L^1 \leq L^2$ iff $L^1\subseteq L^2$ and $L^1 < L^2$ iff $L^1\subset L^2$).
\end{definition}

\begin{example}
	\label{example-stt-cpl}
	The F-signature presented in Ex.~\ref{example-sig-fttcl} induces a corresponding language $\cL^F(\cS_{\text{CPL-F}})$. This language can have the same expressivity as the language $\cL^P(\cS_{\text{CPL}})$  induced by its counterpart in Ex.~\ref{example-sig-cl} if we omit the last item (concerning $\lambda$-abstraction) in the Def.~\ref{def-f-lang} above.
\end{example}

\begin{example}
	\label{example-stt-fol} \sloppy
	The F-signature presented in Ex.~\ref{example-sig-fttfol} induces a language $\cL^F(\cS_{\text{FOL-F}})$,  which corresponds to an extension of FO logic called \emph{extended first-order} logic  \parencite{brown2009extended}, which restricts quantification and equality to base type $e$ but retains $\lambda$-abstractions and HO variables. Notice that we can restrict the first and the last item in the Def.~\ref{def-f-lang} above as suited to our purposes.
\end{example}

\begin{example}
	\label{example-stt-lang} \sloppy
	The F-signatures presented in Ex.~\ref{example-sig-fttstt1} and Ex.~\ref{example-sig-fttstt2} induce two languages $\cL^F(\cS_{\text{STT}})$ and $\cL^F(\cS_{\text{STT$^=$}})$ resp. The latter language is indeed at least as expressive as the first one, since the family of binary connectives $=^\alpha$ can be used to define all others. We refer the reader to \textcite{andrews02} for a discussion.
\end{example}

\begin{example}
	\label{example-relstr-lang} \sloppy
	The F-signature presented in Ex.~\ref{example-sig-relstr} induces a FO language $\cL^F(\cS_{\text{FORS}})$ which can be used to articulate semantic conditions for classical normal modal logics in the style of Kripke semantics. Note that special measures can be taken to remain inside a strict FO fragment, if desired.\footnote{See Ex.~\ref{example-stt-cpl} and Ex.~\ref{example-stt-fol}. We see, however, an advantage in this increased expressivity; as it allows us to go beyond well-known limitations of Kripke semantics for modal logics, e.g., by encoding second-order frame conditions. As an example, current work on the SSE for provability logic bears witness to this claim, since G\"odel-L\"ob's axiom $GL: \Box(\Box\phi\impl\phi)\impl\Box\phi$ imposes second-order restrictions on the frame's accessibility relation, such as finiteness or converse well-foundedness \parencite{boolos1995logic}.}
\end{example}

\begin{example}
	\label{example-nbhdstr-lang} \sloppy
	The F-signature presented in Ex.~\ref{example-sig-nbhdstr} induces a SO language $\cL^F(\cS_{\text{SONS}})$ which can be used to articulate semantic conditions for classical non-normal modal logics using a neighborhood semantics \parencite{chellas1980modal,pacuit2017neighborhood}.
\end{example}

\section{SSEs as Language Fragments and Translations}
\label{sec:sse-char}
The notions introduced in this section are stated in a language- and signature-agnostic way (e.g., dropping superscripts). As will become clear, they make most sense for HO signatures and languages, as discussed previously. It is up to the concerned reader to rephrase them for the propositional (or FO) setting if desired, and to the extent that their reduced expressivity may allow.

\subsection{Derived Signatures}
\label{sec:sse-char:derived}
A signature in our framework is composed of (type-indexed) sets of symbols, further divided into connectives $\cC$ and parameters $\cP$. Notice that these symbols adopt a double role as both atomic building blocks and terms (i.e., well-formed formulas) of a language.
An interesting question thus is whether terms, in general, can act as atomic building blocks in the construction of languages; or more specifically, whether terms, being symbols\footnote{In the current context, we understand symbols as sequences of characters intended to stand for (represent) something else (their denotation).}
too, can also adopt the role of connectives in signatures.
As we will see, we can give this question a partially positive answer. For this we introduce the notion of derived signatures.

Informally, derived signatures are signatures where \textit{closed} terms can play the role of logical connectives. They are called \emph{derived} because they rely on an already existing language, itself induced by a different, `original' signature. In such cases we say that signature $\cS^D$ has been \emph{derived} from signature $\cS$ and introduce the relation $derived(\cS^D,\cS)$ to indicate this. We thus say that $\cS^D$ is a \emph{derived} signature. Signatures which are not derived will be called \emph{primitive} (like the ones discussed so far). For the sake of further analysis, the current notion will be divided into two categories: \emph{rigidly} and \emph{flexibly} derived signatures, which will be formally defined below. Before that, let us introduce the predicate $rigid(\cS)$ to indicate that $\cS$ is either primitive or rigidly derived, and define $ext(\cS^E,\cS):= derived(\cS^E,\cS)~\wedge~S \leq S^E$, which expresses that a signature $S^E$ is an \emph{extension} of signature $S$.

\begin{definition}[Rigidly Derived Signature]
A \emph{rigidly derived signature} is a signature $\cS^D=\langle\{\cC^D_\alpha\}_{\alpha\in\tau},\cP^D\rangle$, where $\cP^D \subseteq \cP$, and each element of $\cC^D_\alpha$ belongs to $Closed(\cL_{Gr}(\cS))$ for some signature $\cS=\langle\cC,\cP\rangle$ such that $rigid(\cS)$. That is, the connectives of $\cS^D$ are closed terms of the grounded language over $\cS$ and its parameters are simply a subset of those of $\cS$, provided that $\cS$ is a rigid signature. 
Notice that the connectives so generated will get the same interpretation in all models; in other words, they are rigidly interpreted (which justifies the chosen wording). {Rigidly} derived signatures thus behave similarly to primitive ones. 
\end{definition}
\begin{definition}[Flexibly Derived Signature]
Similarly, a \emph{flexibly derived signature} is a signature $\cS^D=\langle\{\cC^D_\alpha\}_{\alpha\in\tau},\cP^D\rangle$, where $\cP^D \subseteq \cP$, and each element of $\cC^D_\alpha$ belongs to $Closed(\cL(\cS))$ for some signature $\cS=\langle\cC,\cP\rangle$. That is, the connectives of $\cS^D$ are closed terms of the language over $\cS$ (without further restrictions). Note that, since parameters are also closed terms of the language $\cL(\cS)$, the second component ($\cP^D$) could actually be integrated into the first. Also notice that the connectives so generated, in contrast to those of a primitive or a rigidly derived signature, may get different interpretations in different models, which justifies the qualifications `flexibly' or `flexible'. 
\end{definition}

\begin{proposition} \label{propSig}
	Let $\cS$, $\cS^D$ be signatures such that $derived(\cS^D,\cS)$. We have:
	\begin{enumerate}
		\item \label{prop1} $\cL(\cS^D)\leq\cL(\cS)$ and $\cL(\cS \cup \cS^D)=\cL(\cS)$.
		 
		\item If $\cS \leq \cS^D$, i.e., $ext(\cS^D,\cS)$, then $\cL(\cS^D)=\cL(\cS)$. In words: the extension of a signature still generates the same language.

		\item \label{prop4} If $\cS \nleq \cS^D$, i.e., $not~ext(\cS^D,\cS)$, then $\cL(\cS^D)<\cL(\cS)$. In words: a signature which derives from, but does not extend, another one generates a strict fragment (of the `original' language).
		
		\item \label{prop5} If $\cS^* \leq \cS^D$ then $derived(\cS^*,\cS)$. In words: subsets of derived signatures are also derived (from the same `original' signature).
	\end{enumerate}
\end{proposition}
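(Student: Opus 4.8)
The plan is to derive all four parts from a single \emph{monotonicity} observation about the language-generating operator. By Def.~\ref{def-f-lang}, $\cL(\cS)$ is the \emph{smallest} set containing the atomic building blocks of $\cS$ --- the fixed variables $\cV$, the parameters $\bigcup_\alpha\cP_\alpha$, and the connectives $\bigcup_\alpha\cC_\alpha$ --- that is closed under application and $\lambda$-abstraction. From this ``smallest set'' reading I extract the workhorse lemma: \emph{if every atomic building block of a signature $\cS'$ already lies in $\cL(\cS)$, then $\cL(\cS')\subseteq\cL(\cS)$}; indeed $\cL(\cS)$ is then itself a set containing the building blocks of $\cS'$ and closed under the fixed grammar, hence it contains the least such set, namely $\cL(\cS')$. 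As special cases, $\cS_1\leq\cS_2$ implies $\cL(\cS_1)\subseteq\cL(\cS_2)$, and, since $\langle\cC,\emptys\rangle\leq\cS$, also $\cL_{Gr}(\cS)\subseteq\cL(\cS)$.

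For item~\ref{prop1} I would first check that every building block of $\cS^D$ lies in $\cL(\cS)$: variables are shared; the parameters satisfy $\cP^D\subseteq\cP$ by the definition of a derived signature; and each connective of $\cC^D_\alpha$ is, by the definitions of rigidly/flexibly derived signatures, a closed term of $\cL_{Gr}(\cS)$ resp.\ of $\cL(\cS)$, hence an element of $\cL(\cS)$. The lemma then gives $\cL(\cS^D)\subseteq\cL(\cS)$, i.e.\ $\cL(\cS^D)\leq\cL(\cS)$. For the union, note that $\cS\cup\cS^D$ has parameter family $\cP\cup\cP^D=\cP$ (as $\cP^D\subseteq\cP$) and connective family $\cC_\alpha\cup\cC^D_\alpha$, so its building blocks are exactly those of $\cS$ together with the connectives of $\cS^D$, all of which lie in $\cL(\cS)$; the lemma yields $\cL(\cS\cup\cS^D)\subseteq\cL(\cS)$, while the reverse inclusion is monotonicity applied to $\cS\leq\cS\cup\cS^D$, giving equality. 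The second claim is then immediate: if $ext(\cS^D,\cS)$, i.e.\ $\cS\leq\cS^D$, monotonicity gives $\cL(\cS)\subseteq\cL(\cS^D)$ and item~\ref{prop1} gives the converse, so $\cL(\cS^D)=\cL(\cS)$.

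For item~\ref{prop4} the inclusion $\cL(\cS^D)\subseteq\cL(\cS)$ is item~\ref{prop1}, so only strictness remains. Since $derived(\cS^D,\cS)$ forces $\cP^D\subseteq\cP$, the hypothesis $\cS\nleq\cS^D$ means some atomic symbol of $\cS$ is absent from $\cS^D$: either a connective $c\in\cC_\alpha\setminus\cC^D_\alpha$ or a parameter $p\in\cP_\alpha\setminus\cP^D_\alpha$. Such a symbol is a term of $\cL(\cS)$, and I claim it is not a term of $\cL(\cS^D)$: being atomic, it is neither an application nor an abstraction, so it could only enter $\cL(\cS^D)$ as a building block, yet it is not a variable, not a parameter in $\cP^D$, and not a connective in $\cC^D_\alpha$; hence it witnesses $\cL(\cS^D)\subsetneq\cL(\cS)$. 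This is the one delicate point, and the step I expect to be the main obstacle: for a dropped \emph{parameter} $p$ one must ensure it is not silently reintroduced as a member of $\cC^D_\alpha$ (a bare parameter is, after all, a closed term), which holds as long as connectives and parameters are kept as disjoint symbol classes --- the standing convention here, and the typical situation, as in the modal SSE where the dropped accessibility parameters $R^k$ are absorbed into \emph{compound} box-connectives and never occur bare. Finally, item~\ref{prop5} is a downward inheritance check: if $\cS^*\leq\cS^D$ then $\cC^*_\alpha\subseteq\cC^D_\alpha$ and $\cP^*_\alpha\subseteq\cP^D_\alpha\subseteq\cP_\alpha$, so the connectives of $\cS^*$ remain closed terms of $\cL_{Gr}(\cS)$ (rigid case) resp.\ $\cL(\cS)$ (flexible case) and its parameters remain a subset of $\cP$; thus $\cS^*$ meets the very definition of being derived from $\cS$, giving $derived(\cS^*,\cS)$.
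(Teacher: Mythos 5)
Your proof is correct and, since the paper itself supplies no argument here (it declares the proofs ``straightforward and left to the reader''), there is no competing approach to diverge from --- you are filling in the omitted details. Items \ref{prop1}, the extension claim, and \ref{prop5} follow exactly as you say from the single monotonicity lemma extracted from the ``smallest set'' clause of Def.~\ref{def-f-lang}, and that is surely the intended route. The genuinely valuable part of your write-up is the treatment of item~\ref{prop4}: strictness is \emph{not} automatic, because a symbol dropped from one component of $\cS^D$ can in principle reappear as an element of some $\cC^D_\alpha$ --- the definition of a flexibly derived signature admits \emph{any} closed term of $\cL(\cS)$ as a connective, and the paper itself remarks that parameters, being closed terms, ``could actually be integrated into the first'' component. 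Concretely, taking $\cC^D_\alpha=\cC_\alpha\cup\cP_\alpha$ and $\cP^D=\emptys$ yields a derived signature with $\cS\nleq\cS^D$ but $\cL(\cS^D)=\cL(\cS)$, so the literal statement of item~\ref{prop4} needs exactly the convention you invoke: no primitive symbol of $\cS$ may be silently re-listed among the derived connectives. You were right to flag this as the one delicate step rather than sweep it under the rug; the remaining verifications (shared variables, $\cP^D\subseteq\cP$, closed terms of $\cL_{Gr}(\cS)\subseteq\cL(\cS)$, and the downward inheritance in item~\ref{prop5}) are routine and handled correctly.
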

\begin{proof}
	The proofs are straightforward and left to the reader.
\end{proof}

\begin{example}[STT signature as rigidly derived]
	As noted by \textcite{quine1956unification} and \textcite{henkin1963theory}, the signature $\cS_{\text{STT}}$ can, in principle, be rigidly derived from the signature $\cS_{\text{STT$^=$}}$ (both introduced in Ex.~\ref{example-sig-fttstt1} and Ex.~\ref{example-sig-fttstt2}, respectively). This implies that each connective of $\cS_{\text{STT}}$ can be mapped to a term of $Closed(\cL_{Gr}(\cS_{\text{STT$^=$}}))$. Also note, from Prop.~\ref{propSig}\eqref{prop4}  above, that $\cL(\cS_{\text{STT}})$ can be seen as a strict fragment of $\cL(\cS_{\text{STT$^=$}})$.
\end{example}

\begin{example}[Relational and neighborhood signatures as rigidly derived]
	\label{example-sig-rigid-stt}
	Note that $\cS_{\text{FORS}}$ and $\cS_{\text{SONS}}$ (introduced in Ex.~\ref{example-sig-relstr} and \ref{example-sig-nbhdstr} for relational and neighborhood structures respectively) are proper subsets of the signature $\cS_{\text{STT}}$. Thus, by Prop.~\ref{propSig}\eqref{prop5}, they can also be derived (\emph{rigidly} indeed) from the minimal signature $\cS_{\text{STT$^=$}}$. It follows, from Prop.~\ref{propSig}\eqref{prop4}, that their corresponding languages are strict fragments of $\cL(\cS_{\text{STT}})$, and thus also of $\cL(\cS_{\text{STT$^=$}}$). 
\end{example}

In the remainder of this section we introduce several examples illustrating how several non-classical logics (particularly relevant to normative reasoning) can be given as fragments of the HO language $\cL(\cS_{\text{STT}})$.  We use $wo$ (again) as shorthand for type $w\typa o$, which can be understood as the type of characteristic functions associated with truth sets (sets of possible worlds).
Recalling Ex.~\ref{example-sig-rigid-stt}, we can see that each of the \textit{derived} signatures introduced below is a strict fragment of $\cL(\cS_{\text{STT}})$, and therefore also of $\cL(\cS_{\text{STT$^=$}}$).
First,  we introduce the following abbreviations; cf.~\textcite{J23}:

\begin{definition}[Boolean operators as ($w$-type-lifted) STT terms] \label{definitions-boolean}

$\dot\land$ := $\lambda \phi.\lambda \psi. \lambda w. (\phi~w)\land (\psi~w)$ \qquad $\dot\impl$ := $\lambda \phi.\lambda \psi. \lambda w. (\phi~w)\impl(\psi~w)$

$\dot\lor$  := $\lambda \phi.\lambda \psi. \lambda w. (\phi~w)\lor  (\psi~w)$ \qquad $\dot\neg$ \hskip.35em := $\lambda \phi. \lambda w. \neg (\phi~w)$

\end{definition}

\begin{example}[Rigidly derived signature for S5U]
	\label{example-sig-S5U} We add the following abbreviation to those from  Def.~\ref{definitions-boolean}: $\dot\boxdot^u := \lambda \phi. \lambda w. \forall v. (\phi~v)$.
	A signature for modal logic S5 with universal modality is given by $\cS_{\text{S5U}}=\langle \{\{\dot\neg,\dot\boxdot^u\}_{wo\typa wo},\allowbreak\{\dot\land,\dot\lor,\dot\impl\}_{wo\typa wo\typa wo}\},\{\{p^k\}^{k\in\Nat}_{wo}\} \rangle$, which has been \emph{rigidly} derived from the signature $\cS_{\text{FORS}}$ for relational structures introduced in Ex.~\ref{example-sig-relstr}.
\end{example}

In the remainder each $R^i$ represents an uninterpreted accessibility relation. Similarly $N_1^i$ and $N_2^i$ represent unary and binary neighborhood functions resp.

\begin{example}[Flexibly derived signature for MLK]
	\label{example-sig-MLK}
	We introduce the abbreviations $\dot\boxdot^a := \lambda \phi. \lambda w. \forall v. (R^1~w)~v\impl(\phi~v)$, $\dot\boxdot^p := \lambda \phi. \lambda w. \forall v. (R^2~w)~v\impl(\phi~v)$, $\dot\Diamond^a := \lambda \phi. \dot\neg\,\dot\boxdot^a\,\dot\neg\phi$, and $\dot\Diamond^p := \lambda \phi. \dot\neg\,\dot\boxdot^p\,\dot\neg\phi$.	
A signature for the (bimodal) normal modal logic (extending $K$) is given by  $\cS_{\text{MLK}}=\langle \{\{\dot\neg,\dot\boxdot^a,\dot\Diamond^a,\dot\boxdot^p,\dot\Diamond^p\}_{wo\typa wo},\allowbreak\{\dot\land,\dot\lor,\dot\impl\}_{wo\typa wo\typa wo}\},\{\{p^k\}^{k\in\Nat}_{wo}\} \rangle$, which is \emph{flexibly} derived from the signature for relational structures $\cS_{\text{FORS}}$ introduced in Ex.~\ref{example-sig-relstr}.
\end{example}

\begin{example}[Flexibly derived signature for MDL]
	\label{example-sig-MDL}
	We use here the abbreviation: $\dot O := \lambda \phi. N^1_1~\phi$.
	Now consider the following signature for the family of non-normal monadic deontic logics (MDL) based on a minimal semantics \parencite{chellas1980modal}: $\cS_{\text{SDL}}=\langle \{\{\dot\neg,\dot O\}_{wo\typa wo},\{\dot\land,\dot\lor,\dot\impl\}_{wo\typa wo\typa wo}\},\{\{p^k\}^{k\in\Nat}_{wo}\} \rangle$, which is \emph{flexibly} derived from the signature for neighborhood structures $\cS_{\text{SONS}}$ introduced in Ex.~\ref{example-sig-nbhdstr}.
\end{example}

\begin{example}[Flexibly derived signature for DDL]
	\label{example-sig-DDL}
	We use here the abbreviations: $\dot O^d := \lambda \psi. \lambda \phi. (N^1_2~\phi)~\psi$, $\dot O^a := \lambda \phi. \lambda w. ((N^1_2~(R^1~w))~\phi)~w$, and $\dot O^p := \lambda \phi. \lambda w. ((N^1_2~(R^2~w))~\phi)~w$.	
	Consider the following signature featuring both monadic and dyadic deontic operators: $\cS_{\text{DDL}}=\langle \{\{\dot\neg,\dot O^a,\dot O^b\}_{wo\typa wo},\allowbreak\{\dot\land,\dot\lor,\dot\impl,\dot O^d\}_{wo\typa wo\typa wo}\},\{\{p^k\}^{k\in\Nat}_{wo}\} \rangle$, which has been \emph{flexibly} derived from the signature ${\cS_{\text{FORS}}\cup\cS_{\text{SONS}}}$. Observe that the signature for the DDL by \textcite{Carmo2002}, see also \textcite{C71},  corresponds to $\cS_{\text{CJDDL}}=\cS_{\text{DDL}}\cup\cS_{\text{S5U}}\cup\cS_{\text{MLK}}$, with the latter two as introduced in Ex.~\ref{example-sig-S5U} and \ref{example-sig-MLK} respectively.
\end{example}

\begin{remark}Notice that the (binary) functions $N_2^i$ (having as type: $wo\typa wo\typa wo$) are different from (actually, a generalization of) the neighborhood functions commonly used in dyadic deontic logics (DDL). For instance the function $ob(\phi,\psi):\pow(W)\typa \pow(\pow(W))$, as introduced by e.g.~\textcite{Carmo2002}, would correspond to the type ${wo\typa wo \typa o}$. Recall the discussion in \ref{remark-neighborhood} in Section~\ref{sec:notions:ho-sig}. Our approach has the advantage of facilitating a comparison with algebraic semantics for modal logics (where unary resp.~binary operators are associated with types ${wo\typa wo}$ resp.~${wo\typa wo\typa wo}$).
\end{remark}

\begin{example}[Flexibly derived signature for LFI]
	\label{example-sig-LFI}
	We use here the abbreviations: $\dot\neg^p := \lambda \phi.~\phi~\dot\impl~ (N^1_1~\phi)$, and $\dot\circ := \lambda \phi.~\dot\neg(\phi~\dot\land~(N^1_1~\phi))~\dot\land~(N^2_1~\phi)$.	
	Consider the following signature for the (paraconsistent) logics of formal inconsistency (LFI) with replacement,\footnote{The class LFI of paraconsistent logics was introduced by~\textcite{carnielli2002taxonomy}. They feature a non-explosive negation $\neg$, as well as a (primitive or derived) consistency connective $\circ$ which allows to recover the law of explosion in a controlled way \parencite{carnielli2016paraconsistent}. It has been recently shown that some logics in the hierarchy of LFIs (starting with the minimal logic $mbC$) can be enriched with replacement, and thus given algebraic and neighborhood semantics \parencite{CCF2020}. 
	} based on a neighborhood semantics:
	$\cS_{\text{LFI}}=\langle \{\{\dot\neg^p,\dot\circ\}_{wo\typa wo},\{\dot\land,\dot\lor,\dot\impl\}_{wo\typa wo\typa wo}\},\{\{p^k\}^{k\in\Nat}_{wo}\} \rangle$, which is \emph{flexibly} derived from the signature for neighborhood structures $\cS_{\text{SONS}}$ introduced in Ex.~\ref{example-sig-nbhdstr}.
\end{example}

\begin{remark}[On logical vs. extralogical expressions]
Note that some of the logical connectives of a flexibly derived signature are given varying interpretations in different models. This is because they are articulated by employing parameters (functional constants) such as, e.g., accessibility relations or neighborhood functions. While this phenomenon is well-known in modal logic, our logico-pluralist approach readily exploits (and generalizes) it by viewing the logic of formalization as an additional degree of freedom in the process of logical analysis of argumentative discourse, and thus blurring the distinction between logical/extralogical (resp. syncategorematic/categorematic) expressions.
\end{remark}

Moreover, we can leverage the conceptual tool of \textit{derived signatures} to define F-languages in much the same way as presented in Def.~\ref{def-f-lang}. When the interpretation of their connectives corresponds to that for some particular target logic, such a \textit{derived} F-language can be said to be a \textit{shallow semantical embedding} (SSE). We can then define a corresponding predicate in the host language modeling \textit{validity} for terms of the embedded logic.
	
\begin{definition}[SSEs as language fragments]
	\label{def-sse-der-lang}
	Given a (derived) signature $\cS^T$ (target), such that $derived(\cS^T,\cS^H)$ for some F-signature $\cS^H$ (host), and an interpretation of the connectives of $\cS^T$ as the connectives of some logic $L^T$, we say that $\cL^F(\cS^T)$ is a SSE for logic $L^T$ into $\cL^F(\cS^H)$. We can then define a term $vld(\cdot) \in \cL^F(\cS^H)$, such that $vld(\varphi)$ iff $\varphi \in \cL^F(\cS^T)$ is to be considered as logically valid or true.
\end{definition}

\begin{example}[SSE of MLK into STT]
	\label{example-sse-der-lang}
	The (derived) F-language $\cL^F(\cS_{\text{MLK}})$ generated from the \textit{derived} signature $\cS_{\text{MLK}}$ (cf.~Ex.~\ref{example-sig-MLK})\footnote{
		Recall that $derived(\cS_{\text{MLK}},\cS_{\text{FORS}})$ and $\cS_{\text{FORS}} < \cS_{\text{STT}}$, i.e.~$\cL^F(\cS_{\text{MLK}}) < \cL^F(\cS_{\text{STT}})$.
	} can be said to be a SSE of modal logic K into STT. We define the predicate $vld(\cdot) := \lambda \phi. (\Pi^w\,\phi)$.
\end{example}

It is worth mentioning that we can obtain \textit{faithful} SSEs for canonical extensions of modal logic K (systems KT, KB, S4, etc.) by adding the corresponding restrictions to the accessibility relation as further STT axioms. The notion of \textit{faithfulness} is discussed in the following section.

\subsection{SSEs as Translations -- Faithfulness}
\label{sec:sse-char:sse}

We just saw one variant of the notion of SSEs, namely as the careful selection of a fragment of a HO logical language, in our case STT, to host a target logic. We conceive of this fragment as corresponding to some desired target logical system. In this section, we will characterize another, related notion of a \textit{shallow semantical embedding} (SSE). Starting with an existing formal logical system $L^T$, encoded using a propositional or functional HO language (P- or F-language, cf.~Section~\ref{sec:notions}) we will be able to systematically translate $L^T$ into a corresponding fragment in a host F-language (STT), in a way that preserves logical validity and consequence (i.e.~in a \textit{faithful} manner).

We start by building some useful conceptual background. Notice that the notions of \textit{arities} (as natural numbers) and functional \textit{types} (as recursive structures) are quite similar in a sense: both are enumerable sets of well-formed expressions (which can be defined inductively), and both serve as `labels' for the connectives of a signature. We abstract from this and introduce the notion of \emph{type domain} as an inductively defined set whose elements will be called \emph{types}.

\begin{definition}[Type mapping]
	\label{def-type-mapping}
	Given two type domains $t_1$ and $t_2$, a type mapping is an function $|\cdot|:t_1\rightarrow t_2$. Recalling the (functional) type domain $\tau$ as introduced in Def.~\ref{def-functional-type} and observing that we can take $\Nat$ as a type domain (for \textit{arities}), we define some particularly useful type mappings:
	\begin{itemize}
		\item The mapping from arities to (functional) types (\textit{arity-to-type}) $|\cdot|^\sigma:\Nat\rightarrow\tau$ (with $\sigma\in\tau$, where $|0|^\sigma=\sigma$ and $|k|^\sigma=\sigma^k\typa\sigma$ (with $k\in\Nat$).
		\item The (functional) \textit{type-substitution} mapping $|\cdot|^{\beta/\alpha}:\tau\rightarrow\tau$ (with $\alpha,\beta\in\tau$), where $|\phi|^{\beta/\alpha}$ is obtained by replacing all occurrences of $\alpha$ in $\phi$ by $\beta$.
		\item The \emph{type-lifting} mapping is a special case of the latter: $|\cdot|^{\omega\typa\sigma} := |\cdot|^{\omega\typa\sigma/\sigma}$.
	\end{itemize}	
\end{definition}

We introduce a more general notion of signature encompassing both P- and F-signatures (and also FO signatures if defined appropriately): An \textit{(abstract) signature} over type domain $t$ is just a type-indexed collection of sets of symbols $\cS^t=\{S_\alpha\}_{\alpha\in t}$.\footnote{At this level of abstraction we have no need to differentiate between connectives and parameters. F-signatures can directly be given as (abstract) signatures. For P-signatures we note that every propositional constant $p\in\cP$ can be seen as having arity (i.e., type) zero.}
Type mappings can be naturally extended to structure-preserving mappings between components of signatures, which we term \emph{signature morphisms}.\footnote{The notion presented here is a variant of that introduced by \textcite[Ch.~7]{carnielli2008analysis}. To make this exposition more self-contained and simple we have sidestepped the use of notions from categorial logic \parencite{handbook-logic-cs5} and the theory of institutions \parencite{goguen1992institutions}.}
	
\begin{definition}[Signature morphism]
	\label{def-signature-morphism}
	Let $\cS^{t_1}=\{S^{t_1}_\alpha\}_{\alpha\in t_1}$ and $\cS^{t_2}=\{S^{t_2}_\alpha\}_{\alpha\in t_2}$ be signatures over type domains ${t_1}$ and ${t_2}$; and let ${|\cdot|}:t_1\rightarrow t_2$ be a corresponding type mapping. A \emph{signature morphism} $|\cdot|:\cS^{t_1}\rightarrow\cS^{t_2}$ is a family of maps $h_\alpha: S^{t_1}_\alpha \rightarrow S^{t_2}_{|\alpha|}$, for every $\alpha\in t_1$. A signature morphism is always total but in general not surjective (i.e. an `embedding').
\end{definition}

 Signature morphisms induce syntactical `translation' mappings between languages. $||\cdot||: \cL(\cS^{t_1}) \typa \cL(\cS^{t_2})$. However, translations \textit{proper} involve further restrictions.\footnote{
 	See \textcite{carnielli2009new}, and references therein, for a more thorough discussion of the notion of translations.
 }

\begin{definition}[(Conservative) translation]
	Let $\turns_{L1}$ and $\turns_{L2}$ be the consequence relations associated with two logical systems with languages $L_1$ and $L_2$ respectively, and $P_1, P_2, \dots, P_n$ and $C$ be arbitrary formulas from $L_1$, the mapping $||\cdot||: L_1 \typa L_2$ is a (conservative) translation when:
	\[||P_1||,||P_2||,\dots,||P_n||\,\,\turns_{L2}\,\,||C|| \text{\ \ \ if (and only if)\ \ } P_1,P_2,\dots,P_n\,\,\turns_{L1}\,\,C \]
\end{definition}

\begin{definition}[(Faithful) SSEs as (conservative) translations]
	\label{def:SSE}
	Given a target language $\cL(\cS^{t})$ for type domain $t$ and a host (F-)language $\cL^{(F)}(\cS^{\tau})$, let us define the corresponding type mapping $|\cdot|:t\rightarrow\tau$ and extend it to a signature morphism $|\cdot|:\cS^{t}\rightarrow\Sigma^{\tau}$, where $derived(\Sigma^{\tau},\cS^{\tau})$, thus inducing a translation mapping $||\cdot||: \cL(\cS^{t}) \typa \cL(\cS^{\tau})$ (remember that $\cL(\Sigma^{\tau})\leq \cL(\cS^{\tau})$ by Prop.~\ref{propSig}\eqref{prop1}). This signature morphism together with some special meta-logical validity (or truth) predicate $vld(\cdot) \in \cL(\cS^{\tau})$, is called a (faithful) \emph{shallow semantical embedding} of $\cL(\cS^{t})$ into the HO logic with language $\cL(\cS^{\tau})$, when it induces a (conservative) translation in the manner shown below:
	\[vld(||P_1||),\dots,vld(||P_n ||)\,\turns_{\text{HOL}}\,vld(||C||) \text{\ \ if (and only if)\ \ } P_1,\dots,P_n\,\,\turns_{\text{TL}}\,C \]
		
	$\turns_{\text{TL}}$ and $\turns_{\text{HOL}}$ correspond to the consequence relations for the target logic and the host (HO) logic resp. Observe that the definition above corresponds to the notion of \textit{global} validity. In special cases (e.g. modal logics) we can exploit the deduction metatheorem to define a (faithful) SSE for \textit{local} validity as:
	\[\turns_{\text{HOL}}\,vld(||P_1 \impl\dots\impl P_n \impl C||) \text{\ \ if (and only if)\ \ } P_1,\dots,P_n\,\,\turns_{\text{TL(local)}}\,C \]
\end{definition}

For the following illustrative example we reuse the abbreviations defined in Def.~\ref{definitions-boolean} and append the following:
$\dot\boxdot := \lambda \phi. \lambda w. \forall v. (R^1~w)~v\impl(\phi~v)$.

\begin{example}[SSE of ML into STT]
	\label{example-sse-ml-stt}
	Given the following P-signature for modal propositional
	logic (recall also Ex.~\ref{example-sig-modal}): 
	$$\cS_{\text{ML}}=\langle \{\{\neg,\Box\}_1,\{\land,\lor,\impl\}_2\},\{p^k\}^{k\in\Nat}_0 \rangle.$$
	Now consider the following F-signature flexibly derived from signature $\cS_{\text{FORS}}$ (Ex.~\ref{example-sig-relstr}) for the language of relational structures:	
	$$\cS_{\text{FORS*}}=\langle \{\{\dot\neg,\dot\boxdot\}_{wo\typa wo},\{\dot\land,\dot\lor\}_{wo\typa wo\typa wo}\},\{\{p^k\}^{k\in\Nat}_{wo}, \{R^1\}_{w\typa w\typa o}\} \rangle$$
	and the signature morphism induced by \textit{arity-to-type} mapping ${|\cdot|}^{wo}:\Nat\typa wo$ (Def~\ref{def-type-mapping}), such that
	
	${|\,p^k\,|}^{wo}=p^k$,\qquad\qquad
	${|\,\neg\,|}^{wo}=\dot\neg$,\qquad\qquad
	${|\,\Box\,|}^{wo}~=\dot\boxdot$,\qquad\qquad
	
	${|\land|}^{wo}\,=\dot\land$,\qquad\qquad
	${|\lor|}^{wo}~=\dot\lor$,\qquad\qquad 
	${|\impl|}^{wo}=\lambda \phi. \lambda \psi.\, \dot\neg\phi~\dot\lor~\psi$.
	\ \\

	This signature morphism together with the $\cL(\cS_{\text{FORS}})$ term\footnote{
		Observe that according to the definition of SSE (Def.~\ref{def:SSE}) the term $vld(\cdot)$ may feature additional connectives ($\Pi^w$ in this case) not part of $\cS_{\text{FORS*}}$, while still part of $\cS_{\text{FORS}}$. 
	}
	$$vld(\cdot) := \lambda \phi. (\Pi^w\,\phi)$$
	defines a SSE of $\cL(\cS_{\text{ML}})$ into the relational language $\cL(\cS_{\text{FORS}}) \leq \cL(\cS_{\text{STT}})$. Hence we obtain a \textit{faithful} SSE of the language of (mono-)modal logic into STT. The faithfulness of (a multi-modal generalization of) this SSE has been proved in \textcite{J21}.
\end{example}

\begin{remark}
	Compare Ex.~\ref{example-sse-ml-stt} with Ex.~\ref{example-sse-der-lang}. The end result is, \textit{mutatis mutandis}, similar in both. Observe that, in the case of SSEs as `plain' language fragments (Def.~\ref{def-sse-der-lang}) the question of faithfulness may not appear if the target logic becomes \textit{defined} as the corresponding fragment of the host F-language (STT). This can happen in the case of logic combinations, which quite often do not correspond to any existent (i.e.~previously studied) logical system.
\end{remark}

\subsection{SSEs of logic combinations}
\label{sec:sse-char:log-comb}
We have previously explored two related characterizations of the notion of \textit{shallow semantical embeddings} (SSE); the first as language fragments (Def.~\ref{def-sse-der-lang}), and the second as translations (Def.~\ref{def:SSE}). We saw that the  second characterization allows us to articulate a notion of faithfulness in a very precise manner. Yet SSEs of logical systems and their combinations, employing the first notion, may often involve a tacit faithfulness claim. Let us consider the following:

\begin{example}[SSE of QML into STT]
	\label{example-sse-QML}
	We can obtain a signature for quantified modal logic (QML) by `merging' signatures (analogously to what we showed in Ex.~\ref{example-sig-DDL}). In this case the signatures would correspond to (world-lifted) FO logic (Ex.~\ref{example-sig-fttfol}) and modal logic K (Ex.~\ref{example-sig-MLK}). A SSE of QML (in the first sense) would thus amount to: $\cL^F(\cS_{\text{MLK}} \cup |\cS_{\text{FOL-F}}|^{w\typa o / o})$, where $|\cdot|^{w\typa o / o}$ stands for the signature morphism extending the corresponding type-lifting mapping (recall Defs.~\ref{def-type-mapping} and~\ref{def-signature-morphism}). We have $vld(\cdot) := \lambda \phi. (\Pi^w\,\phi)$ as before. 
\end{example}

An SSE for a quantified modal logic (extending QML in Ex.~\ref{example-sse-QML}) has been proved faithful by \textcite{J23}.

We can also define, in an analogous manner, an SSE for the DDL by \textcite{Carmo2002}, whose signature we introduced in Ex.~\ref{example-sig-DDL}. We note that the corresponding semantical constraints on $N^1_2$, $R^1$ and $R^2$, when encoded in the host language, $\cL^F(\cS_{\text{STT}})$, give rise to a faithful SSE \parencite{C71}. Among several others, faithful SSEs also exist for quantified conditional logics \parencite{J31}, as well as for I/O logic \parencite{J46}, and free logic \parencite{J40}.

It is worth highlighting that the conceptual framework introduced in this section provides a more intuitive grasp of the relationship between SSE and other approaches to combining logics, such as \textit{algebraic fibring}, a generalization of \textit{fusion} of modal logics, cf.~\textcite{sep-logic-combining}. Combining \textit{propositional} logics by `merging signatures', as illustrated in Ex.~\ref{example-sig-DDL} for the DDL by~\textcite{Carmo2002}, can indeed be seen as an instance of the general notion of algebraic fibring \parencite{carnielli2008analysis}. Theoretical connections between algebraic fibring and SSE of combinations of logics are still ongoing work and remain largely unexplored.


\section{Encoding Formal Argumentation}
\label{sec:arg}

\subsection{Notions of Formal Argumentation}
\label{sec:arg:notions}

We follow on the footsteps of previous work depicting the logical analysis of argumentative discourse as a hermeneutical process \parencite{CH,B19}, whereby informal arguments become initially reconstructed as formalized enthymemes and, after that, incrementally evolve towards purely formal deductions satisfying further inferential adequacy criteria (including, among others, consistency and minimality). We thus conceive of a formal argument (quite generally) as a labeled pair consisting of (i) an identifier, (ii) a set of formulas (premises), together with (iii) a formula (conclusion); where both (ii) and (iii) are articulated using a certain logic for formalization. We reserve the adjective \textit{deductive} for those arguments in which the premises logically entail the conclusion.

\begin{definition}[(Deductive) Argument]
\label{definition1}
An argument is a labeled ordered pair $A:\langle\varphi,\alpha\rangle$, where $A$ is an identifier (label) and $\varphi \cup \{\alpha\}$ is a set of formulas of the language of some underlying logic $L$. An argument $A$ is said to be \textit{deductive} (modulo logic $L$) if additionally $\varphi\vdash_{L}\alpha$. The formulas in $\varphi$ are the premises (aka.~assumptions or support) of the argument,
and the formula $\alpha$ is the conclusion (or claim) of the argument.
\end{definition}

Other constraints we may \textit{optionally} set on arguments are consistency: $\varphi$ has to be logically consistent
(according to the chosen logic $L$); and minimality: there is no $\psi \subset \varphi$ such that $\psi\vdash_L\alpha$.
For an argument $A:\langle\varphi,\alpha\rangle$ the function \emph{Premises(A)} returns $\varphi$ (in a specific order) and
\emph{Conclusion(A)} returns $\{\alpha\}$.

Observe that this definition is more permissive than others in the literature; cf.~in particular \textit{deductive argumentation} \parencite{BH}. The `lifecycle' of an argument can be followed with the aid of its identifier. Every pair $\langle\varphi,\alpha\rangle$ can be seen as the current `snapshot' of a certain argument during the give-and-take, iterative process of formal reconstruction. Hence this `evolving argument' will, now and again, become logically valid, consistent, or minimal depending on its current state and, quite importantly, on the current formalization logic. In line with the SSE approach, this (object) logic amounts to the currently chosen fragment of STT used to encode argument's sentences (recall the discussion in Section~\ref{sec:sse-char}).

Similarly to other approaches towards structured argumentation \parencite{baroni2018handbook}, different kinds of \emph{attack} relations between arguments (defeaters, undercuts, rebuttals, etc.) can be easily introduced and interrelated in the meta-logic. However, in the present work we limit ourselves to a quite general notion: an attack between (a set of) argument(s) \emph{A} and argument \emph{B} corresponds to the inconsistency (modulo the meta-logic, e.g.~STT) of the set of formulas formed by the conclusion(s) of \emph{A} together with the premises of \emph{B}. We also consider \emph{support} relations between arguments in an analogous manner, where the conclusion(s) of (a set of) argument(s) \emph{A} logically entails \emph{B} in the meta-logic.\footnote{
	Our notion of argument support is different, though not unrelated, to other notions in the literature; cf.~\textcite{BAF,prakkenmodelling}; cf.~also \textcite{cohen2014survey,baroni2018handbook} for a more comprehensive overview.
} Notice that our framework allows us to extend the notions of attack (support) to the case where two (or more) arguments \textit{jointly} attack (support) another.

\begin{definition}[Attack]
\label{definition2}
An argument \emph{A} \textit{attacks} argument \emph{B} iff the set \textit{Conclusion(A)}~$\cup$~\textit{Premises(B)}
is inconsistent. This definition can be seamlessly extended to two (or more) arguments: $A_1$ and $A_2$ (jointly) attack $B$ iff the set \textit{Conclusion($A_1$)}~$\cup$~\textit{Conclusion($A_2$)}~$\cup$~\textit{Premises(B)} is inconsistent. 
\end{definition}

Notice that this definition subsumes the more traditional one for classical logic,
\textit{Conclusion(A)}~${\vdash\neg\bigwedge X}$ for some $X\subseteq\,$\textit{Premises(B)}, while allowing for paraconsistent
formalization logics
where explosion (inconsistency) does not necessarily follow from pairs of contradictory formulas (recall Ex.~\ref{example-sig-LFI}).

\begin{definition}[Support]
\label{definition3}
An argument \emph{A} \textit{supports} argument \emph{B} iff \textit{Conclusion(A)}~${\vdash X}$
for some $X\in$~\textit{Premises(B)}. This definition can be seamlessly extended to two (or more) arguments:
$A_1$ and $A_2$ (jointly) support $B$ iff \textit{Conclusion($A_1$)}~$\cup$~\textit{Conclusion($A_2$)}~$\vdash X$
for some $X\in$~\textit{Premises(B)}.
\end{definition}

At this point, we would like to highlight the similarity in spirit between ours and the ``descriptive approach'' \parencite{BH} towards reconstructing argument graphs from natural-language sources, in which we carry out a reconstruction process taking as input an abstract argument graph, together with some informal text description of each argument. The task thus becomes one of finding the appropriate logical formulas for encoding the premises and conclusion of each argument, according to the choice of the logic of formalization. As will be illustrated in the case study in Section~\ref{sec:case-study}, there is often a need for finding appropriate `implicit' premises which render the individual arguments logically valid and additionally honor their intended dialectical role in the input abstract graph (i.e., attacking or supporting other arguments). This interpretive aspect, in particular, has been emphasized in our \textit{computational hermeneutics} approach \parencite{CH,B19}, as well as the possibility of modifying (or reconstructing) the input abstract argument graph as new insights, resulting from the formalization process, appear. In their exposition of structured, deductive argumentation, \textcite{BH} duly highlight these aspects and, quite important for us, they emphasize the fact that ``richer'' logic formalisms (i.e., more expressive than ``rule-based'' ones) appear more appropriate for reconstructing ``real-world arguments'' in spite of their higher computational complexity. Such representational and interpretive issues are tackled in our approach by the use of different (combinations of) non-classical and higher-order logics for formalization. For this we utilize the shallow semantical embeddings (SSE) approach as introduced in the previous sections.

\subsection{Isabelle/HOL encoding}
\label{sec:arg:isabelle}

The claim that an argument, or a set of arguments, attacks or supports another argument is, in our approach, conceived as (the claim of) an argument in itself. This argument needs to be reconstructed as logically valid \textit{in the meta-logic}, eventually by adding implicit (meta-logical) premises.
As an illustration, we introduce a term encoded in the language of Church's \textit{simple type theory} STT (cf.~Ex.~\ref{example-sig-fttstt1} and Ex.~\ref{example-stt-lang}) which models the notion of \emph{attack} discussed above:
$$attacks_1 := \lambda\varphi . \lambda\psi .~(\varphi~\land~\psi)~\impl~F$$

The type of the term $attacks_1$ above is $o \typa o\typa o$, corresponding to a binary predicate. In a similar spirit we can introduce other dialectical relations.
We illustrate below the corresponding definitions employing \textit{Isabelle/HOL} syntax. Notice that the symbols are slightly different. We use \textit{Isabelle}'s keyword \isakeyword{abbreviation} to introduce these definitions as syntactic shorthand.

\begin{isabellebody}\normalsize\isanewline%
\ \ \ \ %
\isacommand{abbreviation}%
\ %
{\isachardoublequoteopen}attacks{\isadigit{1}}\ {\isasymphi}\ {\isasympsi}\ \ \ \ {\isasymequiv}\ {\isacharparenleft}{\isasymphi}\ {\isasymand}\ {\isasympsi}{\isacharparenright}\ {\isasymlongrightarrow}\ False{\isachardoublequoteclose}\ \ \ \ \ \ %
\isamarkupcmt{one attacker}\isanewline
\ \ \ \ %
\isacommand{abbreviation}%
\ %
{\isachardoublequoteopen}supports{\isadigit{1}}\ {\isasymphi}\ {\isasympsi}\ \ \ {\isasymequiv}\ {\isasymphi}\ {\isasymlongrightarrow}\ {\isasympsi}{\isachardoublequoteclose}\ \ \ \ \ \ \ \ \ \ \ \ \ \ \ \ \ \ %
\isamarkupcmt{one supporter}\isanewline
\ \ \ \ %
\isacommand{abbreviation}%
\ 
{\isachardoublequoteopen}attacks{\isadigit{2}}\ {\isasymgamma}\ {\isasymphi}\ {\isasympsi}\ \ {\isasymequiv}\ {\isacharparenleft}{\isasymgamma}\ {\isasymand}\ {\isasymphi}\ {\isasymand}\ {\isasympsi}{\isacharparenright}\ {\isasymlongrightarrow}\ False{\isachardoublequoteclose}\ %
\isamarkupcmt{two attackers}\isanewline
\ \ \ \ %
\isacommand{abbreviation}%
\ %
{\isachardoublequoteopen}supports{\isadigit{2}}\ {\isasymgamma}\ {\isasymphi}\ {\isasympsi}\ {\isasymequiv}\ {\isacharparenleft}{\isasymgamma}\ {\isasymand}\ {\isasymphi}{\isacharparenright}\ {\isasymlongrightarrow}\ {\isasympsi}{\isachardoublequoteclose}\ \ \ \ \ \ \ \ \ \ \ %
\isamarkupcmt{two supporters}%
\end{isabellebody}%

\section{Case Study: Climate Engineering Debate}
\label{sec:case-study}

\subsection{Background}
\label{sec:case-study:motivation}

Climate Engineering (CE), aka. Geo-engineering, is the intentional large-scale intervention
in the Earth's climate system in order to counter climate change.
Proposed CE technologies (e.g., solar radiation management, carbon dioxide removal)
are highly controversial, spurring global debates about whether
and under which conditions they should be considered.
Criticisms to CE range from diverting attention and resources from much needed mitigation policies
to potentially catastrophic side-effects; thus the cure may become worse than the disease.
The analyzed arguments around the CE debate presented in this paper originate from the book of 
\textcite{CE}, which is a slightly modified and updated translation of a study
commissioned by the German Federal Ministry of Education and Research (BMBF) on
``Ethical Aspects of Climate Engineering'' finalized in spring 2011.\footnote{
	\citeauthor{CE}'s work aimed at providing a quite complete overview of
	the arguments around CE at the time. However, since the CE controversy has been advancing rapidly, it is to expect that their work has partially become outdated meanwhile.
}
The illustrative analysis carried out in the present paper focuses on a small subset of
the CE argumentative landscape, namely on those arguments concerned with the ``ethics of risk''
(\cite{CE} p. 38ff.) which point out (potentially dangerous) uncertainties in future deployment of CE.\footnote{
	\textit{Isabelle/HOL} sources for this case study have been made available online (\url{https://github.com/davfuenmayor/CE-Debate}).
}

\subsection{Individual (Component) Arguments}
\label{sec:case-study:args}

As has been observed by \citeauthor{CE}, incalculable side-effects and imponderables
constitute one of the main reasons against CE technology deployment. Thus, arguments from the
\emph{ethics of risk} primarily support the thesis: ``CE deployment is morally wrong'' and make for an argument cluster with a non-trivial dialectical structure which we aim at
reconstructing in this section.
We focus on six arguments from the ethics of risk,
which entail that the deployment of CE technologies (today as in the future) is not desirable
because of being morally wrong (argument A22). Supporting arguments of A22 are: A45, A46, A47, A48, A49
(using the original notation by \citeauthor{CE}).
In particular, two of these arguments, namely A48 and A49, are further attacked by A50 and A51.\footnote{
	We strive to remain as close as possible to the original argument network as introduced by \textcite{CE} (with one exception concerning the dialectical relation among arguments A47, A48, A50 and A22, which will be commented upon later on). The reader will notice that some of the arguments could have been merged together. However, Betz and Cacean have deliberately decided not to do so.
	We conjecture that this is due to traceability concerns, given the fact that most arguments have been compiled from different bibliographic sources and authors; cf.~\textcite{BAF} and \textcite{prakkenmodelling} for a discussion on this sort of issues.
}

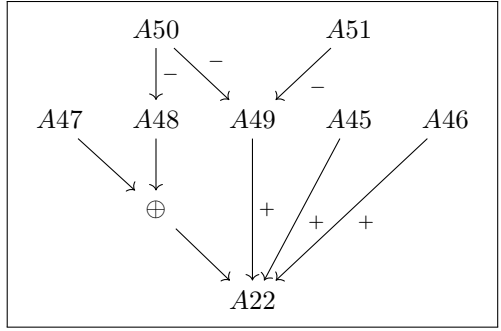
\begin{figure}[tp] \normalsize \centering
	\fbox{
		\begin{tikzcd}[row sep=1em,column sep=1em]
			{}  \& A50 \arrow{d}{-} \arrow{dr}{-} \& {} \& A51 \arrow{dl}{-} \& {} \\[1em]
			A47 \arrow{dr} \& A48 \arrow{d} \& A49 \arrow{dd}{+} \& A45 \arrow{ddl}{+} \& A46 \arrow{ddll}{+} \\[1em]
			{} \& \oplus \arrow{dr} \& {} \& {} \& {} \\[1em]
			{} \& {}\& A22 \& {} \& {}
		\end{tikzcd}
	}
	\caption{Abstract argumentation network for the ethics of risk cluster in the CE debate
  (arrows labeled with +/- indicate \textit{support}/\emph{attack}); $\oplus$ indicates a \emph{joint support}.}
	\label{figArgNetwork}
\end{figure}

\subsubsection{Ethics of Risk Argument (A22)}

\textit{CE deployment is not desirable since it is morally wrong}. This argument has as premise: ``CE deployment is morally wrong'' and as conclusion:
``CE deployment is not desirable''. Notice that both sentences are formalized as (Boolean-valued) propositions. We are thus  restricting ourselves, for the time being,
to a propositional logic. We introduce two new, uninterpreted propositional constants
(``CEisWrong'' and ``CEisNotDesirable'') and interrelate them by means of an implicit premise (``If CE is morally wrong then it is not desirable'').

Since this is the first argument to be represented in the proof assistant \textit{Isabelle/HOL} in this work, we will pay special attention to the syntactic elements used for its formulation in the system. First notice the use of the keyword \isakeyword{consts} to introduce the following non-interpreted constants; their type being $bool$ (corresponding to type $o$ in the exposition in Section~\ref{sec:notions:ho-sig}).

\begin{isabellebody}\normalsize\isanewline
\isacommand{consts}
\ CEisWrong{\isacharcolon}{\isacharcolon}{\isachardoublequoteopen}bool{\isachardoublequoteclose}\ %
\isamarkupcmt{uses Boolean type%
}\isanewline
\isacommand{consts}
\ CEisNotDesirable{\isacharcolon}{\isacharcolon}{\isachardoublequoteopen}bool{\isachardoublequoteclose}
\isanewline
\end{isabellebody}%

Below we employ the keyword \isakeyword{definition} to introduce interpreted constants (of Boolean type). The first two definitions introduce an explicit and an implicit premise, labeled \textit{A22-P1} and \textit{A22-P2} respectively, and the third one introduces its conclusion, labeled \textit{A22-C}.\footnote{Notice that we will keep this same notational convention throughout this work: an argument identifier, say \textit{A99}, followed by a hyphen and either a letter $P$ followed by a premise number (say 2), or by a letter $C$. For example \textit{A99-P2} stands for $\text{Premises(A99)}[2]$ and \textit{A99-C} stands for $\text{Conclusion(A99)}$. Recall the discussion in Section~\ref{sec:arg}.}
Definitions introduce an equivalence between two formulas (by employing the symbol {\isasymequiv}) with the \textit{definiendum} on its left-hand side and the \textit{definiens} on its right-hand side.

\begin{isabellebody}\normalsize\isanewline	
\isacommand{definition}%
\ {\isachardoublequoteopen}A{\isadigit{2}}{\isadigit{2}}{\isacharunderscore}P{\isadigit{1}}\ {\isasymequiv}\ CEisWrong{\isachardoublequoteclose}\ \isanewline
\isacommand{definition}%
\ {\isachardoublequoteopen}A{\isadigit{2}}{\isadigit{2}}{\isacharunderscore}P{\isadigit{2}}\ {\isasymequiv}\ CEisWrong\ {\isasymlongrightarrow}\ CEisNotDesirable{\isachardoublequoteclose}\ %
\isamarkupcmt{implicit premise%
}\isanewline
\isacommand{definition}%
\ {\isachardoublequoteopen}A{\isadigit{2}}{\isadigit{2}}{\isacharunderscore}C\ \ {\isasymequiv}\ CEisNotDesirable{\isachardoublequoteclose}%
\isanewline
\end{isabellebody}%

Observe that the object logic employed to formalize this argument corresponds to a propositional fragment of STT, e.g.,~by employing the F-signature $\cS_{\text{CPL}}$ from Ex.~\ref{example-sig-fttcl}. Since $\cL^F(\cS_{\text{CPL}}) \leq \cL^F(\cS_{\text{STT}})$, this encoding can be considered an SSE of the sort discussed in Def.~\ref{def-sse-der-lang}. The second SSE approach (recalling Section~\ref{sec:sse-char:sse}) would have as starting point a classical propositional logic (using P-signature $\cS_{\text{CPL}}$ from Ex.~\ref{example-sig-cl}) and then extend the \textit{arity-to-type} mapping (Def.~\ref{def-type-mapping}) $|\cdot|^o:\Nat\rightarrow\tau$ into a signature morphism (recall Def.~\ref{def:SSE}). Observe that such an SSE is trivially faithful. Note also that we use the deduction meta-theorem of classical propositional logic to encode $A_1, A_2,\dots, A_n\vdash_{\text{CPL}}B$ as $A_1 \land A_2 \land\dots\land A_n \impl B$.\footnote{
	These considerations are, for sure, an overkill at the present moment, but they duly illustrate the issues at stake and prepare the ground for modal object logics.
}

Below we employ the model finder \emph{Nitpick} \parencite{Nitpick} to find a model (not shown here) satisfying both premises and
conclusion of the formalized argument. This shows consistency.

\begin{isabellebody}\normalsize\isanewline
\isacommand{lemma}
\ {\isachardoublequoteopen}A{\isadigit{2}}{\isadigit{2}}{\isacharunderscore}P{\isadigit{1}}\ {\isasymand}\ A{\isadigit{2}}{\isadigit{2}}{\isacharunderscore}P{\isadigit{2}}\ {\isasymlongrightarrow}\ A{\isadigit{2}}{\isadigit{2}}{\isacharunderscore}C{\isachardoublequoteclose}\ \isacommand{nitpick}
{\isacharbrackleft}satisfy{\isacharbrackright}
\isacommand{oops}
\ %
\isamarkupcmt{model
}%
\isanewline
\end{isabellebody}%

This first argument (A22) serves as a quite straightforward illustration of the role of implicit,
unstated premises in enabling the reconstruction of a candidate argument as a \textit{deductive} argument. 
In this first example, we utilize \textit{Isabelle}'s simplifier (term rewriting engine) to verify that the conclusion follows from the premises.

\begin{isabellebody}\normalsize\isanewline
\isacommand{theorem}%
\ A{\isadigit{2}}{\isadigit{2}}{\isacharunderscore}valid{\isacharcolon}\ {\isachardoublequoteopen}A{\isadigit{2}}{\isadigit{2}}{\isacharunderscore}P{\isadigit{1}}\ {\isasymand}\ A{\isadigit{2}}{\isadigit{2}}{\isacharunderscore}P{\isadigit{2}}\ {\isasymlongrightarrow}\ A{\isadigit{2}}{\isadigit{2}}{\isacharunderscore}C{\isachardoublequoteclose}\isanewline
\ \ %
\isacommand{unfolding}%
\ A{\isadigit{2}}{\isadigit{2}}{\isacharunderscore}C{\isacharunderscore}def\ A{\isadigit{2}}{\isadigit{2}}{\isacharunderscore}P{\isadigit{1}}{\isacharunderscore}def\ A{\isadigit{2}}{\isadigit{2}}{\isacharunderscore}P{\isadigit{2}}{\isacharunderscore}def\ \isacommand{by}%
\ simp\ %
\isamarkupcmt{proved
}%
\end{isabellebody}%

\subsubsection{Termination Problem (A45)}

\textit{CE measures do not possess viable exit options. If deployment is terminated abruptly,
catastrophic climate change ensues}.\footnote{Cf. \textcite{CE} for sources for these
and other proposed theses and arguments in the CE debate.}
This argument (in the given formulation) corresponds to a sort of causal conditional\footnote{
	As mentioned before, we employ in our analysis a material conditional for simplicity of exposition. Other, more appropriate, conditional operators can be employed. For instance, a faithful SSE of a quantified conditional logic has been presented by \textcite{J31}.
} and can be seen as an enthymeme with both an implicit premise and an implicit conclusion. We add as implicit premise (A45-P1) that there is a real possibility of CE interventions being terminated abruptly. The dreaded conclusion being the possibility of a CE-caused catastrophe (A45-C).

Observe that we are employing a possible-worlds semantics for our logic of formalization. The type of propositional atoms is thus \isa{w{\isasymRightarrow}bool{\isachardoublequoteclose}}, which corresponds to the type for characteristic functions of sets of worlds.

\begin{isabellebody}\normalsize\isanewline
\isacommand{consts}%
\ CEisTerminated{\isacharcolon}{\isacharcolon}{\isachardoublequoteopen}w{\isasymRightarrow}bool{\isachardoublequoteclose}\ \ %
\isamarkupcmt{uses type for (world-lifted) propositions%
}\isanewline
\isacommand{consts}%
\ CEisCatastrophic{\isacharcolon}{\isacharcolon}{\isachardoublequoteopen}w{\isasymRightarrow}bool{\isachardoublequoteclose}\isanewline
\isacommand{definition}%
\ {\isachardoublequoteopen}A{\isadigit{4}}{\isadigit{5}}{\isacharunderscore}P{\isadigit{1}}\ {\isasymequiv}\ \isactrlbold {\isasymdiamond}CEisTerminated{\isachardoublequoteclose}\qquad\qquad\qquad%
\isamarkupcmt{implicit premise}
\isanewline
\isacommand{definition}%
\ {\isachardoublequoteopen}A{\isadigit{4}}{\isadigit{5}}{\isacharunderscore}P{\isadigit{2}}\ {\isasymequiv}\ CEisTerminated\ \isactrlbold {\isasymrightarrow}\ CEisCatastrophic{\isachardoublequoteclose}\isanewline
\isacommand{definition}%
\ {\isachardoublequoteopen}A{\isadigit{4}}{\isadigit{5}}{\isacharunderscore}C\ \ {\isasymequiv}\ \isactrlbold {\isasymdiamond}CEisCatastrophic{\isachardoublequoteclose}\qquad\qquad\qquad%
\isamarkupcmt{implicit conclusion}
\isanewline
\end{isabellebody}%

Notice that we have introduced in the above formalization the 
modal operator $\Diamond$ to signify that a proposition is possibly true. The logic of formalization for this argument corresponds to a modal logic K, i.e., the modal logic discussed in the Examples~\ref{example-sse-der-lang} and \ref{example-sse-ml-stt}. Hence, this can be seen as an SSE of the sorts discussed in Sections~\ref{sec:sse-char:derived} and~\ref{sec:sse-char:sse}. 

The expression $vld(\varphi)$ in the SSE for modal logics (see Def.~\ref{def-sse-der-lang} and Def.~\ref{def:SSE}) is represented as $[\vdash\varphi]$ in the \textit{Isabelle/HOL} encoding; it corresponds to \textit{global truth} in modal logic, i.e., truth in all worlds, formalized as: $\Pi^w\,\varphi$ (i.e.~$\forall w.\,\varphi~w$). In particular, observe that we employ the notion of \textit{global consequence} for the formalization of this argument in modal logic, and whose definition in \textit{Isabelle/HOL} is also illustrated below.

\begin{isabellebody}\normalsize\isanewline	
\isacommand{abbreviation}%
\ valid{\isacharcolon}{\isacharcolon}{\isachardoublequoteopen}wo{\isasymRightarrow}bool{\isachardoublequoteclose}\ \  {\isacharparenleft}{\isachardoublequoteopen}{\isacharbrackleft}{\isasymturnstile}\ {\isacharunderscore}{\isacharbrackright}{\isachardoublequoteclose}{\isacharparenright}\ \isakeyword{where}\isanewline
\ \ \ \  {\isachardoublequoteopen}{\isacharbrackleft}{\isasymturnstile}\ \ {\isasymphi}{\isacharbrackright}\ \ {\isasymequiv}\ {\isasymforall}w{\isachardot}\ {\isasymphi}\ w{\isachardoublequoteclose}
	
\isacommand{abbreviation}%
\ conseq{\isacharunderscore}global{\isacharcolon}{\isacharcolon}{\isachardoublequoteopen}wo{\isasymRightarrow}wo{\isasymRightarrow}bool{\isachardoublequoteclose}\ {\isacharparenleft}{\isachardoublequoteopen}{\isacharbrackleft}{\isacharunderscore}\ {\isasymturnstile}\isactrlsub g\ {\isacharunderscore}{\isacharbrackright}{\isachardoublequoteclose}{\isacharparenright}\ \isakeyword{where}\ \isanewline
\ \ \ \ {\isachardoublequoteopen}{\isacharbrackleft}{\isasymphi}\ {\isasymturnstile}\isactrlsub g\ {\isasympsi}{\isacharbrackright}\ {\isasymequiv}\ {\isacharbrackleft}{\isasymturnstile}\ {\isasymphi}{\isacharbrackright}\ {\isasymlongrightarrow}\ {\isacharbrackleft}{\isasymturnstile}\ {\isasympsi}{\isacharbrackright}{\isachardoublequoteclose}

\isacommand{abbreviation}%
\ conseq{\isacharunderscore}global{\isadigit{2}}{\isacharcolon}{\isacharcolon}{\isachardoublequoteopen}wo{\isasymRightarrow}wo{\isasymRightarrow}wo{\isasymRightarrow}bool{\isachardoublequoteclose}\ {\isacharparenleft}{\isachardoublequoteopen}{\isacharbrackleft}{\isacharunderscore}{\isacharcomma}{\isacharunderscore}\ {\isasymturnstile}\isactrlsub g\ {\isacharunderscore}{\isacharbrackright}{\isachardoublequoteclose}{\isacharparenright}\ \isakeyword{where}\ \isanewline
\ \ \ \ {\isachardoublequoteopen}{\isacharbrackleft}{\isasymphi}{\isacharcomma}\ {\isasymgamma}\ {\isasymturnstile}\isactrlsub g\ {\isasympsi}{\isacharbrackright}\ {\isasymequiv}\ {\isacharbrackleft}{\isasymturnstile}\ {\isasymphi}{\isacharbrackright}\ {\isasymlongrightarrow}\ {\isacharparenleft}{\isacharbrackleft}{\isasymturnstile}\ {\isasymgamma}{\isacharbrackright}\ {\isasymlongrightarrow}\ {\isacharbrackleft}{\isasymturnstile}\ {\isasympsi}{\isacharbrackright}{\isacharparenright}{\isachardoublequoteclose}\isanewline
\end{isabellebody}%

Having defined above the object-logical notion of logical consequence for our SSE (Def.~\ref{def:SSE}), we can now formulate the argument and (i) verify that it is consistent by finding a satisfying model using model finder \textit{Nitpick} \parencite{Nitpick}, and (ii) prove it by using the tableau prover \textit{blast}. Both tools come integrated into \textit{Isabelle/HOL}.

\begin{isabellebody}\normalsize\isanewline
\isacommand{lemma}%
\ {\isachardoublequoteopen}{\isacharbrackleft}A{\isadigit{4}}{\isadigit{5}}{\isacharunderscore}P{\isadigit{1}}{\isacharcomma}\ A{\isadigit{4}}{\isadigit{5}}{\isacharunderscore}P{\isadigit{2}}\ {\isasymturnstile}\isactrlsub g\ A{\isadigit{4}}{\isadigit{5}}{\isacharunderscore}C{\isacharbrackright}{\isachardoublequoteclose}\ \isacommand{nitpick}%
\ {\isacharbrackleft}satisfy{\isacharbrackright}%
\ %
\isacommand{oops}%
\ %
\isamarkupcmt{model found
}%
\isanewline
\isacommand{theorem}%
\ A{\isadigit{4}}{\isadigit{5}}{\isacharunderscore}valid{\isacharcolon}\ {\isachardoublequoteopen}{\isacharbrackleft}A{\isadigit{4}}{\isadigit{5}}{\isacharunderscore}P{\isadigit{1}}{\isacharcomma}\ A{\isadigit{4}}{\isadigit{5}}{\isacharunderscore}P{\isadigit{2}}\ {\isasymturnstile}\isactrlsub g\ A{\isadigit{4}}{\isadigit{5}}{\isacharunderscore}C{\isacharbrackright}{\isachardoublequoteclose}\isanewline
\ \ %
\isacommand{unfolding}%
\ A{\isadigit{4}}{\isadigit{5}}{\isacharunderscore}C{\isacharunderscore}def\ A{\isadigit{4}}{\isadigit{5}}{\isacharunderscore}P{\isadigit{1}}{\isacharunderscore}def\ A{\isadigit{4}}{\isadigit{5}}{\isacharunderscore}P{\isadigit{2}}{\isacharunderscore}def\ \isacommand{by}%
\ blast\ %
\isamarkupcmt{proved by tableaus
}%
\end{isabellebody}%

\subsubsection{No Long-term Risk Control (A46)}

\textit{Our social systems and institutions are possibly not capable of controlling risk technologies
on long time scales and of ensuring that they are handled with proper technical care} \parencite{CE}.
Notice that we can make best sense of this objection as (implicitly) presupposing a risk of CE-caused catastrophes (A46-P2).

\begin{isabellebody}\normalsize\isanewline
\isacommand{consts}%
\ RiskControlAbility{\isacharcolon}{\isacharcolon}{\isachardoublequoteopen}w{\isasymRightarrow}bool{\isachardoublequoteclose}\ %
\isamarkupcmt{uses type for (world-lifted) propositions%
}\isanewline
\isacommand{definition}%
\ {\isachardoublequoteopen}A{\isadigit{4}}{\isadigit{6}}{\isacharunderscore}P{\isadigit{1}}\ {\isasymequiv}\ \isactrlbold {\isasymdiamond}\isactrlbold {\isasymnot}RiskControlAbility{\isachardoublequoteclose}\isanewline
\isacommand{definition}%
\ {\isachardoublequoteopen}A{\isadigit{4}}{\isadigit{6}}{\isacharunderscore}P{\isadigit{2}}\ {\isasymequiv}\ \isactrlbold {\isasymnot}RiskControlAbility\ \isactrlbold {\isasymrightarrow}\ \isactrlbold {\isasymdiamond}CEisCatastrophic{\isachardoublequoteclose}\ %
\isamarkupcmt{implicit
}\isanewline
\isacommand{definition}%
\ {\isachardoublequoteopen}A{\isadigit{4}}{\isadigit{6}}{\isacharunderscore}C\ \ {\isasymequiv}\ \isactrlbold {\isasymdiamond}CEisCatastrophic{\isachardoublequoteclose}%
\isanewline
\isanewline
\isacommand{lemma}%
\ {\isachardoublequoteopen}{\isacharbrackleft}A{\isadigit{4}}{\isadigit{6}}{\isacharunderscore}P{\isadigit{1}}{\isacharcomma}\ A{\isadigit{4}}{\isadigit{6}}{\isacharunderscore}P{\isadigit{2}}\ {\isasymturnstile}\isactrlsub g\ A{\isadigit{4}}{\isadigit{6}}{\isacharunderscore}C{\isacharbrackright}{\isachardoublequoteclose}\ \isacommand{nitpick}%
\ {\isacharbrackleft}satisfy{\isacharbrackright}%
\ %
\isacommand{oops}%
\ %
\isamarkupcmt{model found
}%
\isanewline
\isacommand{lemma}%
\ {\isachardoublequoteopen}{\isacharbrackleft}A{\isadigit{4}}{\isadigit{6}}{\isacharunderscore}P{\isadigit{1}}{\isacharcomma}\ A{\isadigit{4}}{\isadigit{6}}{\isacharunderscore}P{\isadigit{2}}\ {\isasymturnstile}\isactrlsub g\ A{\isadigit{4}}{\isadigit{6}}{\isacharunderscore}C{\isacharbrackright}{\isachardoublequoteclose}\ \isacommand{nitpick}%
\ %
\isacommand{oops}%
\ %
\isamarkupcmt{countermodel found
}%
\isanewline
\end{isabellebody}%

As before, we can use automated tools to help us find further implicit premises, which may actually correspond to
modifications to the logic of formalization. As shown above by model finder \textit{Nitpick}, the argument A46 is contingent: it is both satisfiable and countersatisfiable. An quick examination of the presented countermodels showed that a stronger base logic, namely modal logic K4, makes A46 \textit{deductive}. This means: we need to postulate the so-called axiom ``4'': $[\vdash {\Box}{\varphi}~{\impl}~{\Box}{\Box}{\varphi}]$ (which can be read intuitively as: ``necessary propositions are so, necessarily''). It is well known that this axiom corresponds to assuming transitivity of the underlying accessibility relation, as we do below. First observe that a (meta-logical) inference can be represented in \textit{Isabelle/HOL} notation as \isakeyword{assumes} \isa{{\isasymphi}\isactrlsub 1 \isakeyword{and} \ \dots\ {\isasymphi}\isactrlsub n \ \isakeyword{shows} {\isasymalpha}}.\footnote{The logic of \textit{Isabelle/HOL} is based upon (higher-order) Gentzen-type natural deduction \parencite{Isabelle}. Our implementation thus handles arguments as (sequent-like) inferences independently from each other. This is different than having the premises for all arguments as axioms in a same theory resp. knowledge-base and drawing conclusions as theorems. In the current approach, our knowledge-base is a set of definitions; two arguments with mutually inconsistent premises will not cause any problems nor trivialize anything. Moreover, conflicting arguments with, \textit{prima facie}, the same premises are also possible; the cause for the conflicting conclusions is to be found in additional (implicit) premises.
}

\begin{isabellebody}\normalsize\isanewline
\isacommand{theorem}%
\ A{\isadigit{4}}{\isadigit{6}}{\isacharunderscore}valid{\isacharcolon}\ \isakeyword{assumes}\ {\isachardoublequoteopen}Transitive\ aRel{\isachardoublequoteclose}
\isanewline~\qquad\qquad\qquad\qquad\quad
\isakeyword{shows}\ {\isachardoublequoteopen}{\isacharbrackleft}A{\isadigit{4}}{\isadigit{6}}{\isacharunderscore}P{\isadigit{1}}{\isacharcomma}\ A{\isadigit{4}}{\isadigit{6}}{\isacharunderscore}P{\isadigit{2}}\ {\isasymturnstile}\isactrlsub g\ A{\isadigit{4}}{\isadigit{6}}{\isacharunderscore}C{\isacharbrackright}{\isachardoublequoteclose}
\isanewline\qquad\qquad
\isacommand{unfolding}%
\ A{\isadigit{4}}{\isadigit{6}}{\isacharunderscore}C{\isacharunderscore}def\ A{\isadigit{4}}{\isadigit{6}}{\isacharunderscore}P{\isadigit{1}}{\isacharunderscore}def\ A{\isadigit{4}}{\isadigit{6}}{\isacharunderscore}P{\isadigit{2}}{\isacharunderscore}def\ \isacommand{using}%
\ assms\ \isacommand{by}%
\ blast
\end{isabellebody}%

\subsubsection{CE Interventions are Irreversible (A47)}
As presented by \textcite{CE}, the next two arguments consist of a simple sentence (their conclusion). The present argument states that \textit{CE represents an irreversible intervention}, i.e., that once the first
interventions in world's climate have been set in motion, there is no way to `undo' them. 
We will be working with a predicate logic, and
thus introduce an additional type $e$ for actions (interventions). We also extend our base logic with quantifiers, thus obtaining a quantified modal logic (QML) of formalization (Ex.~\ref{example-sse-QML}).

\begin{isabellebody}\normalsize\isanewline
\isacommand{typedecl}%
\ e\ %
\isamarkupcmt{new type for actions%
}\isanewline
\isacommand{consts}%
\ CEAction{\isacharcolon}{\isacharcolon}{\isachardoublequoteopen}e{\isasymRightarrow}w{\isasymRightarrow}bool{\isachardoublequoteclose}\ %
\isamarkupcmt{uses type for (world-lifted) predicates%
}\isanewline
\isacommand{consts}%
\ Irreversible{\isacharcolon}{\isacharcolon}{\isachardoublequoteopen}e{\isasymRightarrow}w{\isasymRightarrow}bool{\isachardoublequoteclose}
\isanewline
\isacommand{definition}%
\ {\isachardoublequoteopen}A{\isadigit{4}}{\isadigit{7}}{\isacharunderscore}C\ {\isasymequiv}\ \isactrlbold {\isasymforall}I{\isachardot}\ CEAction{\isacharparenleft}I{\isacharparenright}\ \isactrlbold {\isasymrightarrow}\ Irreversible{\isacharparenleft}I{\isacharparenright}{\isachardoublequoteclose}%
\end{isabellebody}%
%
\subsubsection{No Ability to Retain Options after Irreversible Interventions (A48)}

A48 claims that \textit{irreversible interventions (of any kind) narrow the options of future generations in an unacceptable way}, i.e., it is wrong to carry them out. 

\begin{isabellebody}\normalsize\isanewline
\isacommand{consts}%
\ WrongAction{\isacharcolon}{\isacharcolon}{\isachardoublequoteopen}e{\isasymRightarrow}w{\isasymRightarrow}bool{\isachardoublequoteclose}\ %
\isamarkupcmt{uses type for (world-lifted) predicates%
}\isanewline
\isacommand{definition}%
\ {\isachardoublequoteopen}A{\isadigit{4}}{\isadigit{8}}{\isacharunderscore}C\ {\isasymequiv}\ \isactrlbold {\isasymforall}I{\isachardot}\ Irreversible{\isacharparenleft}I{\isacharparenright}\ \isactrlbold {\isasymrightarrow}\ WrongAction{\isacharparenleft}I{\isacharparenright}{\isachardoublequoteclose}%
\end{isabellebody}%

\subsubsection{Unpredictable Side-Effects are Wrong (A49)}
\textit{As long as the side-effects of CE technologies cannot be reliably predicted, their deployment is morally wrong.} This argument states two things: (i) CE interventions have unpredictable side-effects, and (ii) CE interventions are wrong. We aim at having (i) entail (ii). An additional (implicit) premise, A49-P2, suggests that interventions with unpredictable side-effects are wrong.

\begin{isabellebody}\normalsize\isanewline
\isacommand{consts}%
\ USideEffects{\isacharcolon}{\isacharcolon}{\isachardoublequoteopen}e{\isasymRightarrow}w{\isasymRightarrow}bool{\isachardoublequoteclose}\ %
\isamarkupcmt{uses type for (world-lifted) predicates%
}\isanewline
\isacommand{definition}%
\ {\isachardoublequoteopen}A{\isadigit{4}}{\isadigit{9}}{\isacharunderscore}P{\isadigit{1}}\ {\isasymequiv}\ \isactrlbold {\isasymforall}I{\isachardot}\ CEAction{\isacharparenleft}I{\isacharparenright}\ \isactrlbold {\isasymrightarrow}\ USideEffects{\isacharparenleft}I{\isacharparenright}{\isachardoublequoteclose}\isanewline
\isacommand{definition}%
\ {\isachardoublequoteopen}A{\isadigit{4}}{\isadigit{9}}{\isacharunderscore}P{\isadigit{2}}\ {\isasymequiv}\ \isactrlbold {\isasymforall}I{\isachardot}\ USideEffects{\isacharparenleft}I{\isacharparenright}\ \isactrlbold {\isasymrightarrow}\ WrongAction{\isacharparenleft}I{\isacharparenright}{\isachardoublequoteclose}\ %
\isamarkupcmt{implicit
}\isanewline
\isacommand{definition}%
\ {\isachardoublequoteopen}A{\isadigit{4}}{\isadigit{9}}{\isacharunderscore}C\ \ {\isasymequiv}\ \isactrlbold {\isasymforall}I{\isachardot}\ CEAction{\isacharparenleft}I{\isacharparenright}\ \isactrlbold {\isasymrightarrow}\ WrongAction{\isacharparenleft}I{\isacharparenright}{\isachardoublequoteclose}\isanewline
\end{isabellebody}%
We take the opportunity to illustrate an alternative consequence relation for base logic, namely a \textit{local} consequence relation, exploiting the deduction meta-theorem for modal-like logics (recall the discussion in Def.~\ref{def:SSE}).
\begin{isabellebody}\normalsize\isanewline
\isacommand{abbreviation}%
\ conseq{\isacharunderscore}local{\isadigit{2}}{\isacharcolon}{\isacharcolon}{\isachardoublequoteopen}wo{\isasymRightarrow}wo{\isasymRightarrow}wo{\isasymRightarrow}bool{\isachardoublequoteclose}\ {\isacharparenleft}{\isachardoublequoteopen}{\isacharbrackleft}{\isacharunderscore}{\isacharcomma}{\isacharunderscore}\ {\isasymturnstile}\isactrlsub l\ {\isacharunderscore}{\isacharbrackright}{\isachardoublequoteclose}{\isacharparenright}\ \isakeyword{where}\ \isanewline
\ \ \ \ {\isachardoublequoteopen}{\isacharbrackleft}{\isasymphi}{\isacharcomma}\ {\isasymgamma}\ {\isasymturnstile}\isactrlsub l\ {\isasympsi}{\isacharbrackright}\ {\isasymequiv}\ {\isacharbrackleft}{\isasymturnstile}\ {\isasymphi}\ \isactrlbold {\isasymrightarrow}\ {\isacharparenleft}{\isasymgamma}\ \isactrlbold {\isasymrightarrow}\ {\isasympsi}{\isacharparenright}{\isacharbrackright}{\isachardoublequoteclose}
\isanewline
\isacommand{lemma}%
\ {\isachardoublequoteopen}{\isacharbrackleft}A{\isadigit{4}}{\isadigit{9}}{\isacharunderscore}P{\isadigit{1}}{\isacharcomma}\ A{\isadigit{4}}{\isadigit{9}}{\isacharunderscore}P{\isadigit{2}}\ {\isasymturnstile}\isactrlsub l\ A{\isadigit{4}}{\isadigit{9}}{\isacharunderscore}C{\isacharbrackright}{\isachardoublequoteclose}\ \isacommand{nitpick}%
\ {\isacharbrackleft}satisfy{\isacharbrackright}%
\ %
\isacommand{oops}%
\ %
\isamarkupcmt{model found
}%
\isanewline
\isacommand{theorem}%
\ A{\isadigit{4}}{\isadigit{9}}{\isacharunderscore}valid{\isacharcolon}\ {\isachardoublequoteopen}{\isacharbrackleft}A{\isadigit{4}}{\isadigit{9}}{\isacharunderscore}P{\isadigit{1}}{\isacharcomma}\ A{\isadigit{4}}{\isadigit{9}}{\isacharunderscore}P{\isadigit{2}}\ {\isasymturnstile}\isactrlsub l\ A{\isadigit{4}}{\isadigit{9}}{\isacharunderscore}C{\isacharbrackright}{\isachardoublequoteclose}\isanewline
\ \ %
\isacommand{unfolding}%
\ A{\isadigit{4}}{\isadigit{9}}{\isacharunderscore}C{\isacharunderscore}def\ A{\isadigit{4}}{\isadigit{9}}{\isacharunderscore}P{\isadigit{1}}{\isacharunderscore}def\ A{\isadigit{4}}{\isadigit{9}}{\isacharunderscore}P{\isadigit{2}}{\isacharunderscore}def\ \isacommand{by}%
\ simp\ %
\isamarkupcmt{proved
}%
\end{isabellebody}%
%
\subsubsection{Mitigation is also Irreversible (A50)}

Mitigation of climate change (i.e., the `preventive alternative' to CE), too, is, to some
extent, an irreversible intervention with unforeseen side-effects.

\begin{isabellebody}\normalsize\isanewline
\isacommand{consts}%
\ Mitigation{\isacharcolon}{\isacharcolon}e\ %
\isamarkupcmt{uses type for actions/interventions%
}\isanewline
\isacommand{definition}%
\ {\isachardoublequoteopen}A{\isadigit{5}}{\isadigit{0}}{\isacharunderscore}C\ {\isasymequiv}\ Irreversible{\isacharparenleft}Mitigation{\isacharparenright}\ \isactrlbold {\isasymand}\ USideEffects{\isacharparenleft}Mitigation{\isacharparenright}{\isachardoublequoteclose}%
\end{isabellebody}%
%
\subsubsection{All Interventions have Unpredictable Side-Effects (A51)}

This defense of CE states that we do never completely foresee the consequences of our actions, and thus aims at somehow trivializing the concerns regarding unforeseen side-effects of CE.

\begin{isabellebody}\normalsize\isanewline
\isacommand{definition}%
\ {\isachardoublequoteopen}A{\isadigit{5}}{\isadigit{1}}{\isacharunderscore}C\ {\isasymequiv}\ \isactrlbold {\isasymforall}I{\isachardot}\ USideEffects{\isacharparenleft}I{\isacharparenright}{\isachardoublequoteclose}%
\end{isabellebody}%
\subsection{Reconstructing the Argument Graph}
\label{sec:case-study:graph}
Recalling our discussion in Section~\ref{sec:arg}, the claim that an argument (or a set of arguments) attacks or supports another argument is, in our approach, conceived as (the claim of) an argument in itself, which also needs to be reconstructed as logically valid (i.e.~as a \textit{deductive} argument, optionally consistent and minimal), either by tweaking the formalization of sentences, or, as we show, by adding additional premises. Those new premises may range from unstated, `implicit' argumentative assumptions, over meaning postulates (of a more or less definitional nature), to axioms and definitions constraining the interpretation of the logical connectives of our underlying (object) logic. A philosophically provocative feature of the \textit{shallow semantical embeddings} (SSE) approach is that it remains agnostic as regards these alternatives, as the only thing it `sees' are expressions in the meta-logic (STT/HOL). This blurres in a sense the distinction between logical and extralogical (resp.,~syncategorematic and categorematic) expressions.

In this spirit, SSEs entitles us to formalize the following question in the meta-logic: ``Does argument(s) A attack (support) argument B?'', even in cases where A and B are formalized using different (object) logics. In case of a negative answer (maybe elicited with the help of automated tools), we have the opportunity to `patch' the corresponding argument (recall Fig.~\ref{figArgAnalysis} and the discussion in Section~\ref{sec:linguafranca}).\footnote{
	This corresponds indeed to an abductive procedure carried out in the meta-logic (STT/HOL). At present, the automation of abductive reasoning for expressive (first- and higher-order) logics is an underdeveloped area. We thus count on leveraging human ingenuity with the help of state-of-the-art automated reasoning technology, in particular proof assistants like \textit{Isabelle/HOL} (cf.~the discussion in \cite[\S 4]{B19}).
}

We will encode support, resp. attack, claims as meta-logical theorems. As usual, they have to become logically entailed from a collection of premises. We thus have e.g., $\varphi_1,\ \dots\ \varphi_n\vdash_{\text{HOL}}\,attacks(A,B)$, where $\varphi_n$ stands for a meta-logical formula (e.g.~a semantical condition on some logical connective) and $\vdash_{\text{HOL}}$ stands for the consequence relation in the higher-order meta-logic.\footnote{Recall that  a (meta-logical) inference is represented in our employed \textit{Isabelle/HOL} notation as \isakeyword{assumes} \isa{{\isasymphi}\isactrlsub 1 \isakeyword{and} \ \dots\ {\isasymphi}\isactrlsub n \ \isakeyword{shows} {\isasymalpha}}.}

\subsubsection{Does A45 support A22?}

When combining several object logics in a same (meta-logical) inference, we employ the corresponding $vld(\cdot)$ predicate (recall Def.~\ref{def:SSE}) as defined for each object logic. For instance, for modal logics the expression $vld(\varphi):=[\vdash\varphi]$ for some term $\varphi$ has been previously defined as global truth, i.e., truth in all worlds, and formalized as $\Pi^w\,\varphi$ (i.e.~$\forall w.\,\varphi~w$), recalling Section~\ref{sec:sse-char}.

In this example we reconstruct a support relation as a deductive inference. Let us recall the corresponding definitions: \textit{A45-C} $\equiv \Diamond\text{CEisCatastrophic}$ (type ${w\typa o}$) and \textit{A22-P1} $\equiv\text{CEisWrong}$ (type $o$).
Observe how the implicit premise becomes formalized in a fragment of the meta-logic (STT/HOL) resulting from the combination of the fragments corresponding to each argument (cf.~Fig.~\ref{figArgAnalysis}).

In this example, as in others, we have utilized three kinds of automated tools integrated
into \emph{Isabelle/HOL}: the model finder \emph{Nitpick} \parencite{Nitpick}, which finds a
counterexample to the claim that A45 supports A22 (without further implicit premises);
the tableau prover \emph{blast},
which can indeed verify that, by adding an implicit
premise (\textit{If CE is possibly catastrophic then its deployment is wrong}), the support relation obtains; and the ``hammer'' tool \emph{Sledgehammer} \parencite{Sledgehammer},
which automagically finds minimal sets of assumptions needed to prove a theorem.

\begin{isabellebody}\normalsize\isanewline	
\isacommand{lemma}%
\ {\isachardoublequoteopen}supports{\isadigit{1}}\ {\isacharbrackleft}{\isasymturnstile}A{\isadigit{4}}{\isadigit{5}}{\isacharunderscore}C{\isacharbrackright}\ A{\isadigit{2}}{\isadigit{2}}{\isacharunderscore}P{\isadigit{1}}{\isachardoublequoteclose}\ \isacommand{nitpick}%
\ %
\isacommand{oops}%
\ %
\isamarkupcmt{countermodel found%
}%
\isanewline
\isacommand{theorem}%
\ \isakeyword{assumes}\ {\isachardoublequoteopen}{\isacharbrackleft}{\isasymturnstile}\ \isactrlbold {\isasymdiamond}CEisCatastrophic\ \isactrlbold {\isasymrightarrow}\ {\isacharparenleft}{\isasymlambda}w{\isachardot}\ CEisWrong{\isacharparenright}{\isacharbrackright}{\isachardoublequoteclose}\ %
\isamarkupcmt{implicit%
}\isanewline
\ \ \ \ \ \ \ \ \isakeyword{shows}\ {\isachardoublequoteopen}supports{\isadigit{1}}\ {\isacharbrackleft}{\isasymturnstile}A{\isadigit{4}}{\isadigit{5}}{\isacharunderscore}C{\isacharbrackright}\ A{\isadigit{2}}{\isadigit{2}}{\isacharunderscore}P{\isadigit{1}}{\isachardoublequoteclose}\isanewline
\ \ %
\isacommand{unfolding}%
\ A{\isadigit{2}}{\isadigit{2}}{\isacharunderscore}P{\isadigit{1}}{\isacharunderscore}def\ A{\isadigit{4}}{\isadigit{5}}{\isacharunderscore}C{\isacharunderscore}def\ \isacommand{using}%
\ assms\ \isacommand{by}%
\ blast\ %
\isamarkupcmt{proved%
}%
\end{isabellebody}%
%
\subsubsection{Does A46 support A22?}

This reconstruction is similar as the previous, reusing the same implicit premise. Let us recall the definition: \isa{
 {\isachardoublequoteopen}A{\isadigit{4}}{\isadigit{6}}{\isacharunderscore}C\ {\isasymequiv}\ \isactrlbold {\isasymdiamond}CEisCatastrophic{\isachardoublequoteclose
}}.

\begin{isabellebody}\normalsize\isanewline
\isacommand{lemma}%
\ {\isachardoublequoteopen}supports{\isadigit{1}}\ {\isacharbrackleft}{\isasymturnstile}A{\isadigit{4}}{\isadigit{6}}{\isacharunderscore}C{\isacharbrackright}\ A{\isadigit{2}}{\isadigit{2}}{\isacharunderscore}P{\isadigit{1}}{\isachardoublequoteclose}\ \isacommand{nitpick}%
\ %
\isacommand{oops}%
\ %
\isamarkupcmt{countermodel found%
}%
\isanewline
\isacommand{theorem}%
\ \isakeyword{assumes}\ {\isachardoublequoteopen}{\isacharbrackleft}{\isasymturnstile}\ \isactrlbold {\isasymdiamond}CEisCatastrophic\ \isactrlbold {\isasymrightarrow}\ {\isacharparenleft}{\isasymlambda}w{\isachardot}\ CEisWrong{\isacharparenright}{\isacharbrackright}{\isachardoublequoteclose}\ %
\isamarkupcmt{implicit%
}\isanewline
\ \ \ \ \ \ \ \ \isakeyword{shows}\ {\isachardoublequoteopen}supports{\isadigit{1}}\ {\isacharbrackleft}{\isasymturnstile}A{\isadigit{4}}{\isadigit{6}}{\isacharunderscore}C{\isacharbrackright}\ A{\isadigit{2}}{\isadigit{2}}{\isacharunderscore}P{\isadigit{1}}{\isachardoublequoteclose}\ \isanewline
\ \ %
\isacommand{unfolding}%
\ A{\isadigit{2}}{\isadigit{2}}{\isacharunderscore}P{\isadigit{1}}{\isacharunderscore}def\ A{\isadigit{4}}{\isadigit{6}}{\isacharunderscore}C{\isacharunderscore}def\ \isacommand{using}%
\ assms\ \isacommand{by}%
\ simp\ %
\isamarkupcmt{proved
}%
\end{isabellebody}%
%
\subsubsection{Do A47 and A48 (together) support A22?}

Here we have diverged from the argument network as introduced in \textcite{CE}, where A48 is rendered as an argument supporting A47. We claim that our reconstruction is more faithful to the given natural language description of the arguments and also better represents their intended dialectical relations. Also notice that an implicit premise is needed to reconstruct this support relation as logically valid, namely that \textit{if every CE action is wrong, then deployment of CE is wrong.}

Let us recall the definitions:
	\isa{
		A{\isadigit{4}}{\isadigit{7}}{\isacharunderscore}C\ {\isasymequiv}\ \isactrlbold {\isasymforall}I{\isachardot}\ CEAction{\isacharparenleft}I{\isacharparenright}\ \isactrlbold {\isasymrightarrow}\ Irreversible{\isacharparenleft}I{\isacharparenright}{\isachardoublequoteclose}} and 
	\isa{
		A{\isadigit{4}}{\isadigit{8}}{\isacharunderscore}C\ {\isasymequiv}\ \isactrlbold {\isasymforall}I{\isachardot}\ Irreversible{\isacharparenleft}I{\isacharparenright}\ \isactrlbold {\isasymrightarrow}\ WrongAction{\isacharparenleft}I{\isacharparenright}{\isachardoublequoteclose}%
}.

\begin{isabellebody}\normalsize\isanewline
\isacommand{lemma}%
\ {\isachardoublequoteopen}supports{\isadigit{2}}\ {\isacharbrackleft}{\isasymturnstile}A{\isadigit{4}}{\isadigit{7}}{\isacharunderscore}C{\isacharbrackright}\ {\isacharbrackleft}{\isasymturnstile}A{\isadigit{4}}{\isadigit{8}}{\isacharunderscore}C{\isacharbrackright}\ A{\isadigit{2}}{\isadigit{2}}{\isacharunderscore}P{\isadigit{1}}{\isachardoublequoteclose}\ \isacommand{nitpick}%
\ %
\isacommand{oops}%
\ %
\isamarkupcmt{countermodel
}%
\isanewline
\isacommand{theorem}%
\ \isakeyword{assumes}\ {\isachardoublequoteopen}{\isacharbrackleft}{\isasymturnstile}\isactrlbold {\isasymforall}I{\isachardot}\ CEAction{\isacharparenleft}I{\isacharparenright}\ \isactrlbold {\isasymrightarrow}\ WrongAction{\isacharparenleft}I{\isacharparenright}{\isacharbrackright}{\isasymlongrightarrow}\ CEisWrong{\isachardoublequoteclose}\ %
\isamarkupcmt{implicit premise%
}\isanewline
\ \ \ \ \ \ \ \ \isakeyword{shows}\ {\isachardoublequoteopen}supports{\isadigit{2}}\ {\isacharbrackleft}{\isasymturnstile}A{\isadigit{4}}{\isadigit{7}}{\isacharunderscore}C{\isacharbrackright}\ {\isacharbrackleft}{\isasymturnstile}A{\isadigit{4}}{\isadigit{8}}{\isacharunderscore}C{\isacharbrackright}\ A{\isadigit{2}}{\isadigit{2}}{\isacharunderscore}P{\isadigit{1}}{\isachardoublequoteclose}\isanewline
\ \ %
\isacommand{unfolding}%
\ A{\isadigit{2}}{\isadigit{2}}{\isacharunderscore}P{\isadigit{1}}{\isacharunderscore}def\ A{\isadigit{4}}{\isadigit{7}}{\isacharunderscore}C{\isacharunderscore}def\ A{\isadigit{4}}{\isadigit{8}}{\isacharunderscore}C{\isacharunderscore}def\ \isacommand{using}%
\ assms\ \isacommand{by}%
\ simp%
\end{isabellebody}%
%
\subsubsection{Does A49 support A22?}

Observe that in this example we reuse the previous implicit premise.
Let us recall the definition: \isa{
A{\isadigit{4}}{\isadigit{9}}{\isacharunderscore}C\ \ {\isasymequiv}\ \isactrlbold {\isasymforall}I{\isachardot}\ CEAction{\isacharparenleft}I{\isacharparenright}\ \isactrlbold {\isasymrightarrow}\ WrongAction{\isacharparenleft}I{\isacharparenright}{\isachardoublequoteclose
}}.

\begin{isabellebody}\normalsize\isanewline
\isacommand{lemma}%
\ {\isachardoublequoteopen}supports{\isadigit{1}}\ {\isacharbrackleft}{\isasymturnstile}A{\isadigit{4}}{\isadigit{9}}{\isacharunderscore}C{\isacharbrackright}\ A{\isadigit{2}}{\isadigit{2}}{\isacharunderscore}P{\isadigit{1}}{\isachardoublequoteclose}\ \isacommand{nitpick}%
\ %
\isacommand{oops}%
\ %
\isamarkupcmt{countermodel found%
}%
\isanewline
\isacommand{theorem}%
\ \isakeyword{assumes}\ {\isachardoublequoteopen}{\isacharbrackleft}{\isasymturnstile}\isactrlbold {\isasymforall}I{\isachardot}\ CEAction{\isacharparenleft}I{\isacharparenright}\ \isactrlbold {\isasymrightarrow}\ WrongAction{\isacharparenleft}I{\isacharparenright}{\isacharbrackright}{\isasymlongrightarrow}CEisWrong{\isachardoublequoteclose}\ %
\isamarkupcmt{implicit premise%
}\isanewline
\ \ \ \ \ \ \ \ \isakeyword{shows}\ {\isachardoublequoteopen}supports{\isadigit{1}}\ {\isacharbrackleft}{\isasymturnstile}A{\isadigit{4}}{\isadigit{9}}{\isacharunderscore}C{\isacharbrackright}\ A{\isadigit{2}}{\isadigit{2}}{\isacharunderscore}P{\isadigit{1}}{\isachardoublequoteclose}\ \isanewline
\ \ %
\isacommand{unfolding}%
\ A{\isadigit{2}}{\isadigit{2}}{\isacharunderscore}P{\isadigit{1}}{\isacharunderscore}def\ A{\isadigit{4}}{\isadigit{9}}{\isacharunderscore}C{\isacharunderscore}def\ \isacommand{using}%
\ assms\ \isacommand{by}%
\ simp\ %
\isamarkupcmt{proved by simpl.%
}%
\end{isabellebody}%
%
\subsubsection{Does A50 attack both A48 and A49?}

In this example, the indirect attack towards the main thesis (conclusion of A22) persists, since A47 and A48 jointly support A22 (see above).\footnote{
	Here, again, we diverge from \textcite{CE} original argument network. We argue that, given the natural language description of the arguments, an attack relation between A50 and A48 is better motivated than between A50 and A47 (as originally presented).
}
Also notice that we	employ an additional, implicit premise to reconstruct the attack relation, namely that \textit{mitigation of climate change is not a wrong action}.

Let us recall again the corresponding definitions:

\isa{
	A{\isadigit{5}}{\isadigit{0}}{\isacharunderscore}C\ {\isasymequiv}\ Irreversible{\isacharparenleft}Mitigation{\isacharparenright}\ \isactrlbold {\isasymand}\ USideEffects{\isacharparenleft}Mitigation{\isacharparenright}{\isachardoublequoteclose}},
\isanewline
\isa{
	A{\isadigit{4}}{\isadigit{8}}{\isacharunderscore}C\ {\isasymequiv}\ \isactrlbold {\isasymforall}I{\isachardot}\ Irreversible{\isacharparenleft}I{\isacharparenright}\ \isactrlbold {\isasymrightarrow}\ WrongAction{\isacharparenleft}I{\isacharparenright}{\isachardoublequoteclose}},
\isanewline
\isa{
	A{\isadigit{4}}{\isadigit{9}}{\isacharunderscore}P{\isadigit{2}}\ {\isasymequiv}\ \isactrlbold {\isasymforall}I{\isachardot}\ USideEffects{\isacharparenleft}I{\isacharparenright}\ \isactrlbold {\isasymrightarrow}\ WrongAction{\isacharparenleft}I{\isacharparenright}{\isachardoublequoteclose}}.

\begin{isabellebody}\normalsize\isanewline
\isacommand{lemma}%
\ {\isachardoublequoteopen}attacks{\isadigit{1}}\ {\isacharbrackleft}{\isasymturnstile}A{\isadigit{5}}{\isadigit{0}}{\isacharunderscore}C{\isacharbrackright}\ {\isacharbrackleft}{\isasymturnstile}A{\isadigit{4}}{\isadigit{8}}{\isacharunderscore}C{\isacharbrackright}{\isachardoublequoteclose}\ \isacommand{nitpick}%
\ %
\isacommand{oops}%
\ %
\isamarkupcmt{countermodel found%
}%
\isanewline
\isacommand{lemma}%
\ {\isachardoublequoteopen}attacks{\isadigit{1}}\ {\isacharbrackleft}{\isasymturnstile}A{\isadigit{5}}{\isadigit{0}}{\isacharunderscore}C{\isacharbrackright}\ {\isacharbrackleft}{\isasymturnstile}A{\isadigit{4}}{\isadigit{9}}{\isacharunderscore}P{\isadigit{2}}{\isacharbrackright}{\isachardoublequoteclose}\ \isacommand{nitpick}%
\ %
\isacommand{oops}%
\ %
\isamarkupcmt{countermodel found%
}%
\isanewline
\isanewline
\isacommand{theorem}%
\ \isakeyword{assumes}\ {\isachardoublequoteopen}{\isacharbrackleft}{\isasymturnstile}\ \isactrlbold {\isasymnot}WrongAction{\isacharparenleft}Mitigation{\isacharparenright}{\isacharbrackright}{\isachardoublequoteclose}\ %
\isamarkupcmt{implicit premise%
}\isanewline
\ \ \ \ \ \ \ \ \isakeyword{shows}\ {\isachardoublequoteopen}attacks{\isadigit{1}}\ {\isacharbrackleft}{\isasymturnstile}A{\isadigit{5}}{\isadigit{0}}{\isacharunderscore}C{\isacharbrackright}\ {\isacharbrackleft}{\isasymturnstile}A{\isadigit{4}}{\isadigit{8}}{\isacharunderscore}C{\isacharbrackright}{\isachardoublequoteclose}\isanewline
\ \ %
\isacommand{unfolding}%
\ A{\isadigit{4}}{\isadigit{8}}{\isacharunderscore}C{\isacharunderscore}def\ A{\isadigit{5}}{\isadigit{0}}{\isacharunderscore}C{\isacharunderscore}def\ \isacommand{using}%
\ assms\ \isacommand{by}%
\ blast\ %
\isamarkupcmt{proved
}%
\isanewline
\isanewline
\isacommand{theorem}%
\ \isakeyword{assumes}\ {\isachardoublequoteopen}{\isacharbrackleft}{\isasymturnstile}\ \isactrlbold {\isasymnot}WrongAction{\isacharparenleft}Mitigation{\isacharparenright}{\isacharbrackright}{\isachardoublequoteclose}\ %
\isamarkupcmt{implicit premise%
}\isanewline
\ \ \ \ \ \ \ \ \isakeyword{shows}\ {\isachardoublequoteopen}attacks{\isadigit{1}}\ {\isacharbrackleft}{\isasymturnstile}A{\isadigit{5}}{\isadigit{0}}{\isacharunderscore}C{\isacharbrackright}\ {\isacharbrackleft}{\isasymturnstile}A{\isadigit{4}}{\isadigit{9}}{\isacharunderscore}P{\isadigit{2}}{\isacharbrackright}{\isachardoublequoteclose}\ \ \ \ \ \ \ \ \ \ \ \ \ \ \ \ \isanewline
\ \ %
\isacommand{unfolding}%
\ A{\isadigit{4}}{\isadigit{9}}{\isacharunderscore}P{\isadigit{2}}{\isacharunderscore}def\ A{\isadigit{5}}{\isadigit{0}}{\isacharunderscore}C{\isacharunderscore}def\ \isacommand{using}%
\ assms\ \isacommand{by}%
\ blast\ %
\isamarkupcmt{proved
}%
\end{isabellebody}%
%
\subsubsection{Does A51 attack A49?}

Notice that the previous additional premise is again required  to reconstruct the attack relation as deductively valid.
Let us recall the definitions:
\isa{
	A{\isadigit{4}}{\isadigit{9}}{\isacharunderscore}P{\isadigit{2}}\ {\isasymequiv}\ \isactrlbold {\isasymforall}I{\isachardot}\ USideEffects{\isacharparenleft}I{\isacharparenright}\ \isactrlbold {\isasymrightarrow}\ WrongAction{\isacharparenleft}I{\isacharparenright}{\isachardoublequoteclose}} and%
\isa{
	A{\isadigit{5}}{\isadigit{1}}{\isacharunderscore}C\ {\isasymequiv}\ \isactrlbold {\isasymforall}I{\isachardot}USideEffects{\isacharparenleft}I{\isacharparenright}{\isachardoublequoteclose}}.

\begin{isabellebody}\normalsize\isanewline
\isacommand{lemma}%
\ {\isachardoublequoteopen}attacks{\isadigit{1}}\ {\isacharbrackleft}{\isasymturnstile}A{\isadigit{5}}{\isadigit{1}}{\isacharunderscore}C{\isacharbrackright}\ {\isacharbrackleft}{\isasymturnstile}A{\isadigit{4}}{\isadigit{9}}{\isacharunderscore}P{\isadigit{2}}{\isacharbrackright}{\isachardoublequoteclose}\ \isacommand{nitpick}%
\ %
\isacommand{oops}%
\ %
\isamarkupcmt{countermodel found%
}%
\isanewline
\isanewline
\isacommand{theorem}%
\ \isakeyword{assumes}\ {\isachardoublequoteopen}{\isacharbrackleft}{\isasymturnstile}\ \isactrlbold {\isasymnot}WrongAction{\isacharparenleft}Mitigation{\isacharparenright}{\isacharbrackright}{\isachardoublequoteclose}\ %
\isamarkupcmt{implicit premise%
}\isanewline
\ \ \ \ \ \ \ \ \isakeyword{shows}\ {\isachardoublequoteopen}attacks{\isadigit{1}}\ {\isacharbrackleft}{\isasymturnstile}A{\isadigit{5}}{\isadigit{1}}{\isacharunderscore}C{\isacharbrackright}\ {\isacharbrackleft}{\isasymturnstile}A{\isadigit{4}}{\isadigit{9}}{\isacharunderscore}P{\isadigit{2}}{\isacharbrackright}{\isachardoublequoteclose}\isanewline
\ \ %
\isacommand{unfolding}%
\ A{\isadigit{4}}{\isadigit{9}}{\isacharunderscore}P{\isadigit{2}}{\isacharunderscore}def\ A{\isadigit{5}}{\isadigit{1}}{\isacharunderscore}C{\isacharunderscore}def\ \isacommand{using}%
\ assms\ \isacommand{by}%
\ blast\ %
\isamarkupcmt{proved
}%
\end{isabellebody}%
\section{Challenges and Prospects}
\label{sec:conclusion}

The contributions of this article are manifold. Most relevant in our opinion is that we have laid the conceptual foundations for the integration of ideas and techniques from abstract argumentation theory and universal meta-logical reasoning towards the realization of our conception of a computational hermeneutics within a uniform meta-logical environment, thereby creating a momentum of mutual fertilization. Moreover, we have depicted the added value of the integrated system by analyzing and discussing example arguments and argument networks in climate engineering, thereby illustrating how our technology can possibly support the systematic, interactive and partly automated analysis of contemporary discourse in a topical area. 

As a further contribution, this article has introduced a conceptual framework, together with several illustrative examples, enabling an improved characterization and theoretical analysis of the \textit{shallow semantical embeddings} (SSE) technique. We expect that this framework will support, in particular, the development of more elegant faithfulness proofs for SSEs of logics and logic combinations; while these proofs have been very technical and verbose so far, they would now be carried out concisely in few lines only. This is ongoing work which clearly represents an important step towards a systematic method for encoding logics (e.g., modal, deontic, epistemic, paraconsistent) in higher-order logic, a pillar stone of the LogiKEy framework and methodology for designing normative theories in ethical and legal reasoning~\parencite{J48}. Finally, we have contributed a practical demonstration of the maturity of modern theorem proving technology, in particular, the \textit{Isabelle/HOL} proof assistant with its integrated reasoning tools, to support ambitious further work towards a computational hermeneutics.

As regards formal argumentation, some preliminary experiments have shown that the expressivity of higher-order logic indeed allows us to mechanize reasoning with Dung's notions of complete, grounded, preferred and stable semantics in \emph{Isabelle/HOL} and to use its integrated automated tools to carry out computations. This can be very useful for prototyping tasks and as well for reasoning with arguments at the abstract and
structural level in an integrated fashion. Further work is necessary to obtain a satisfactorily usable and scalable implementation.

Concerning the prospects for a fully automated argument reconstruction process, it is worth mentioning that the initial step from natural language to formal representations lies outside our proposed framework. For example, in the presented case study we have `outsourced' the argumentation-mining task to the researchers who carried out the original analysis, while the semantic-parsing task was carried out `manually' by us. However, we are much impressed by recent progress in natural language processing (NLP) for these applications and follow with great interest the latest developments in the argumentation mining community.
Another important challenge concerns the problem of coming up with candidates for additional (implicit) premises that render an inference valid, which is an instance of the old problem of abduction.
The evaluation of candidate formulas is indeed supported by our tool-set, e.g.~model finders can determine (in)consistency automatically, and theorem provers and so-called ``hammers'' help us verify validity using minimal sets of assumptions (also useful to identify `question-begging' ones). The creative part of coming up with (plausible) candidates is, however, still a task for humans in our approach. Abductive reasoning techniques for the kind of expressive logics we work with (e.g. intensional, first- and higher-order) remain, to the best of our knowledge, very limited, so as to support full automation. We could reuse techniques and tools for some less expressive fragments of higher-order logic (in cases where formalized arguments are bound to remain inside those fragments); but in general we strive for the finest granularity level in the semantic analysis, e.g. along the lines of Montague semantics. With all its pros and cons, this is the distinguishing aspect of our approach.


\begin{acknowledgements}
Acknowledgements will be given in the final version.
\end{acknowledgements}

%
\section*{Conflict of interest}
The authors declare that they have no conflict of interest.


\printbibliography

\end{document}